\definecolor{Blue}{rgb}{0.9,0.3,0.3}
\newcommand{\squishlist}{
   \begin{list}{$\bullet$}
    { \setlength{\itemsep}{0pt}      \setlength{\parsep}{3pt}
      \setlength{\topsep}{3pt}       \setlength{\partopsep}{0pt}
      \setlength{\leftmargin}{1.5em} \setlength{\labelwidth}{1em}
      \setlength{\labelsep}{0.5em} } }
\newcommand{\squishlisttwo}{
   \begin{list}{$\bullet$}
    { \setlength{\itemsep}{0pt}    \setlength{\parsep}{0pt}
      \setlength{\topsep}{0pt}     \setlength{\partopsep}{0pt}
      \setlength{\leftmargin}{2em} \setlength{\labelwidth}{1.5em}
      \setlength{\labelsep}{0.5em} } }
\newcommand{\squishend}{
    \end{list}  }
\newcommand{\real}{\mathbb{R}}
\newcommand{\tr}{\mathrm{tr}}
\newcommand{\gauss}{{\cal N}}
\newcommand{\myvec}[1]{\mathbf{#1}}
\newcommand{\myvecsym}[1]{\boldsymbol{#1}}
\newcommand{\vepsilon}{\myvecsym{\epsilon}}
\newcommand{\vlambda}{\myvecsym{\lambda}}
\newcommand{\vLambda}{\myvecsym{\Lambda}}
\newcommand{\vtheta}{\myvecsym{\theta}}
\newcommand{\vTheta}{\myvecsym{\Theta}}
\newcommand{\vSigma}{\myvecsym{\Sigma}}
\newcommand{\vf}{\myvec{f}}
\newcommand{\vk}{\myvec{k}}
\newcommand{\vm}{\myvec{m}}
\newcommand{\vx}{\myvec{x}}
\newcommand{\vy}{\myvec{y}}
\newcommand{\vI}{\myvec{I}}
\newcommand{\vJ}{\myvec{J}}
\newcommand{\vK}{\myvec{K}}
\newcommand{\vQ}{\myvec{Q}}
\newcommand{\E}{\mathbb{E}}
\newcommand{\expectAngle}[1]{\langle #1 \rangle}
\newcommand{\expect}[1]{\mathbb{E}\left[ {#1} \right]}
\newcommand{\Var}{\mathrm{Var}}
\newcommand{\diag}{\mathrm{diag}}
\newcommand{\calA}{{\cal A}}
\newcommand{\calB}{{\cal B}}
\newcommand{\calD}{{\cal D}}
\newcommand{\calH}{{\cal H}}
\newcommand{\calI}{{\cal I}}
\newcommand{\calK}{{\cal K}}
\newcommand{\calX}{{\cal X}}
\newcommand{\calO}{{\cal O}}
\newcommand{\data}{\calD}
\newcommand{\be}{\begin{equation}}
\newcommand{\ee}{\end{equation}}
\newcommand{\bea}{\begin{eqnarray}}
\newcommand{\eea}{\end{eqnarray}}
\newcommand{\beaa}{\begin{eqnarray*}}
\newcommand{\eeaa}{\end{eqnarray*}}
\DeclareMathAlphabet{\mathpzc}{OT1}{pzc}{m}{n}
\newtheorem{mydefinition1}{Definition}
\newtheorem{mydefinition2}{Definition}
\newtheorem{mydefinition6}{Definition}
\newtheorem{lemma}[mydefinition1]{Lemma}
\newtheorem{theorem}[mydefinition2]{Theorem}
\newtheorem{proposition}[mydefinition6]{Proposition}
\newcolumntype{C}{>{\centering\arraybackslash} m{6cm} }
\DeclareMathOperator*{\argmax}{arg\,max}
\newcommand{\rnorm}[1]{\|{#1}\|_{\calH_{\vtheta}(\calX)}}
\def\T{^\mathsf{T}}
\title{Theoretical Analysis of Bayesian Optimisation \\ with Unknown Gaussian Process Hyper-Parameters}
\author{
Ziyu Wang$^1$, 
Nando de Freitas$^{1,2}$ \\
$^1$University of Oxford\\
$^2$Canadian Institute for Advanced Research \\
\texttt{\{ziyu.wang, nando\}@cs.ox.ac.uk}
}
\begin{document}

\maketitle


\begin{abstract}
Bayesian optimisation has gained great popularity as a tool for optimising the parameters of machine learning algorithms and models. Somewhat ironically, setting up the hyper-parameters of Bayesian optimisation methods is notoriously hard. While reasonable practical solutions have been advanced, they can often fail to find the best optima. Surprisingly, there is little theoretical analysis of this crucial problem in the literature. To address this, 
we derive a cumulative regret bound for Bayesian optimisation with Gaussian processes and unknown kernel hyper-parameters in the stochastic setting. The bound, which applies to 
the expected improvement acquisition function and sub-Gaussian observation noise, provides us with guidelines on how to design hyper-parameter estimation methods. A simple simulation demonstrates the importance of following these guidelines. 
\end{abstract}

\section{Introduction}

Bayesian optimisation has become an important area of research and development in the field of machine learning, as evidenced by recent media coverage \cite{WiredSpearmint} and a blossoming range of applications to interactive user-interfaces \cite{Brochu:2010}, robotics \cite{Lizotte:2007,martinez-cantin:2007}, environmental monitoring \cite{Marchant:2012}, information extraction \cite{Wang:2014aistats}, combinatorial optimisation \cite{Hutter:smac,Wang:rembo}, automatic machine learning \cite{Bergstra:2011,Snoek:2012,Swersky:2013,Thornton:2013,Hoffman:2014}, sensor networks \cite{Garnett:2010,Srinivas:2010}, adaptive Monte Carlo \cite{Mahendran:2012}, experimental design \cite{Azimi:2012} and reinforcement learning \cite{Brochu:2009}.

In Bayesian optimisation, Gaussian processes are one of the preferred priors for quantifying the uncertainty in the objective function \cite{Brochu:2009}. However, estimating the hyper-parameters of the Gaussian process kernel with very few objective function evaluations is a daunting task, often with disastrous results as illustrated by a simple example in \cite{Benassi:2011}. The typical estimation of the hyper-parameters by maximising the marginal likelihood~\cite{Rasmussen:2006,Jones:1998} can easily fall into traps; as shown in \cite{Bull:2011}. To circumvent this, \cite{Wang:rembo} introduced adaptive bounds on the range of hyper-parameter values. 

Several authors have proposed to integrate out the hyper-parameters using quadrature and Monte Carlo methods \cite{Osborne:2009,Brochu:2010,Snoek:2012}. Despite the advantages brought in by this more sophisticated treatment of uncertainty, Bayesian optimisation can still fall in traps, as illustrated with a simple simulation example in this paper. 

To the best of our knowledge, the work of Bull \cite{Bull:2011} provides the only known regret bound for Bayesian optimisation when the hyper-parameters are unknown. By introducing lower and upper bounds on the possible range of hyper-parameters values, Bull obtains convergence rates for deterministic objective functions, when estimating the hyper-parameters by maximum likelihood. Here, we extend the work of Bull to stochastic objective functions. Our results apply to sub-Gaussian noise, \emph{e.g.}, symmetric Gaussian, Bernoulli, or uniform noise.
\section{Bayesian optimisation}
\label{sec:problem}


We consider a sequential decision approach to global optimization of smooth functions $f(\cdot): \calX \mapsto \mathbb{R}$ over an index set $\calX \subset \real^d$. At the $t$-th decision round, we select an input $\vx_t \in \calX$ and observe the value of a \emph{black-box} reward function $f(\vx_t)$. The returned value may be deterministic, $y_t = f(\vx_t)$, or stochastic, $y_t = f(\vx_t) + \epsilon_t$.
Our goal is to maximise the cumulative rewards $\sum_{t=1}^{T} f(\vx_t)$. That is, we wish to approach the performance of the optimiser $\vx^* = \argmax_{\vx \in \calX} f(\vx)$ rapidly. Since the optimiser is unknown, we have to trade-off exploitation and exploration in the search process.

This sequential optimisation approach is natural when the function does not have an obvious mathematical representation  (\emph{e.g.}, when querying people to maximize some objective) or when the function is too expensive to evaluate (\emph{e.g.}, as in control problems and automatic algorithm configuration with massive datasets and models).  

Although the function is unknown, we assume that it is smooth. It is natural to adopt a Bayesian modelling approach whereby one introduces a prior to encode our beliefs over the smoothness of the function, and an observation model to describe the data $\data_t=\{\vx_{1:t}, \vy_{1:t}\}$ up to the $t$-th round. Using these two models and the rules of probability, we derive a posterior distribution $p(f(\cdot)|\data_t)$ from which we can carry out inference about properties of $f(\cdot)$ in light of the data, such as the location of its maxima.

\subsection{Bayesian optimisation with Gaussian processes}

A Gaussian processes (GP) offer a flexible and relatively simple way of placing priors over functions; we refer the reader to \cite{Rasmussen:2006} for details on these stochastic processes. Such priors are completely characterised by a mean function $m(\cdot)$ and a covariance kernel $k(\cdot,\cdot)$ on the index sets $\calX$ and $\calX \otimes \calX$.  In particular, given any finite
collection of inputs $\vx_{1:t}$ the outputs are jointly Gaussian,
\begin{equation*}
    f(\vx_{1:t})| \vtheta \sim \gauss(\vm(\vx_{1:t}), \vK^{\vtheta}(\vx_{1:t}, \vx_{1:t})),
\end{equation*}
where $\vK^{\vtheta}(\vx_{1:t}, \vx_{1:t})_{ij}=k^{\vtheta}(\vx_i,\vx_j)$ is the covariance matrix (parametrised by $\vtheta$)
and $\vm(\vx_{1:t})_i=m(\vx_i)$ the mean vector. For convenience, we assume a zero-mean prior. 
We consider the following types of covariance
kernels
\bea
    k^{\vtheta}_\textrm{SE}(\vx,\vx')
    &=& \exp(-\tfrac12r^2) 
    \label{eqn:se}
    \\
    k^{\vtheta}_\textrm{Mat\'ern(5/2)}(\vx,\vx')
    &=& \exp(-\sqrt5 r) (1+\sqrt{5}r+\tfrac53r^2)
    \label{eqn:matern}
    \\
    \text{where }
    r &=& (\vx-\vx')\T \diag(\boldsymbol \vtheta^2)^{-1}(\vx-\vx').
    \nonumber
\eea
Both kernels are parametrised by $d$ length-scale hyper-parameters $\theta_i$. 
These kernels work well in situations where
little is known about the space in question, although the Mat\'ern tends to make
less stringent smoothness assumptions, thus making it a good fit for Bayesian
optimization.

We assume that the observations of the function at any point $\vx_t$ are corrupted by $\sigma$-sub-Gaussian noise $y_t = f(\vx_t) + \epsilon_t$. Our theoretical results cover this general type of noise, which encompasses symmetric Gaussian and Bernoulli noise. However, for ease of presentation,  we will focus on the tractable case of Gaussian noise $\epsilon_t\sim\gauss(0, \sigma^2)$ in this section. We refer the reader to \cite{Brochu:2010} for an example of discrete noise, which necessitates the introduction of approximate inference methods.

Given the data $\data_t=\{\vx_{1:t}, \vy_{1:t}\}$, the
joint distribution of the data and an arbitrary evaluation point
$\vx$ is
\begin{equation*}
    \left[\begin{matrix}
        \vy_{1:t}\\
        f(\vx)
    \end{matrix}\right] \left| \vtheta\sim
    \gauss\left(0,
    \left[\begin{matrix}
        \vK^{\vtheta}_t+\sigma^2\vI & \vk^{\vtheta}_t(\vx) \\
        \vk^{\vtheta}_t(\vx)\T & k^{\vtheta}(\vx, \vx)
    \end{matrix}\right]
    \right). \right. 
\end{equation*}
where $\vK^{\vtheta}_t=\vK^{\vtheta}(\vx_{1:t},\vx_{1:t})$ and $\vk^{\vtheta}_t(\vx)=\vk^{\vtheta}(\vx_{1:t},\vx)$. It is well known that the predictive posterior distribution of any evaluation point $\vx$
is marginally Gaussian $f(\vx)|\data_t,\vtheta \sim\gauss(\mu_t(\vx; \vtheta),
\sigma_t(\vx; \vtheta))^2$, where
\bea
    \mu_t(\vx; \vtheta) &=& \expect{f(\vx)|\data_t} = \vk^{\vtheta}_t(\vx)\T(\vK^{\vtheta}_t+\sigma^2\vI)^{-1}\vy_{1:t},
    \label{eqn:mean}
    \\
    {\cal K}_t^{\vtheta}(\vx,\vx')  &=& \expect{f(\vx)f(\vx')|\data_t} = k^{\vtheta}(\vx,\vx') - \vk^{\vtheta}_t(\vx)\T(\vK^{\vtheta}_t+\sigma^2\vI)^{-1}\vk^{\vtheta}_t(\vx')
    \label{eqn:postCov}
		\\
    \sigma_t(\vx; \vtheta)^2 &=& {\cal K}_t^{\vtheta}(\vx,\vx). 
    \label{eqn:std}
\eea


Having specified a distribution over the target function and a mechanism for updating this distribution as data arrives, we turn our attention to the problem of selecting an acquisition function $\alpha(\cdot|\data_t)$ for choosing the next query point, 
\begin{equation*}
    \vx_{t+1} = \argmax_{\vx\in\calX}\alpha(\vx|\data_t).
\end{equation*}
The choice of acquisition function is crucial. It must be efficiently computable since it will be optimized at every decision
round. More subtly, it must use the statistics of $p(f(\vx)|\data_t, \vtheta)$ to trade-off exploitation (where $\mu_t(\vx; \vtheta)$ is high) and exploration (where $\sigma_t(\vx; \vtheta)$ is high) effectively.


Although many acquisition functions have been proposed (see for example~\cite{Mockus:1982,Jones:2001,Hoffman:2011,Hennig:2012,Snoek:2012,Hoffman:2014}), the expected improvement (EI) criterion remains a default choice in popular Bayesian optimisation packages, such as SMAC and Spearmint \cite{Hutter:smac,Snoek:2012}. If
we let $\vx^+_t=\argmax_{i\leq
t}f(\vx_i; \vtheta)$ denote the current \emph{incumbent}, the EI acquisition function can be
written in closed form as
\be
    \alpha^\textrm{EI(f)}_{\vtheta}(\vx|\data_t)
    = \E[\max\{0,f(\vx) - f(\vx^+)\}|\data_t]
   = \sigma_t(\vx; \vtheta)[a\Phi(a) + \phi(a)]
    \label{eqn:eideterministic}
\ee
with $a =\frac{\mu_t(\vx; \vtheta) - f(\vx^+)}{\sigma_t(\vx; \vtheta)}$, and $\phi$ and $\Phi$ are the standard
normal density and distribution functions respectively. In the special case of $\sigma_t(\vx; \vtheta) =0$, we set $\alpha^\textrm{EI(f)}_{\vtheta}(\vx|\data_t)= 0$. The expected improvement is best understood as a family of one-step-decision heuristics \cite{Brochu:2009}, with many members in this family. While the above member is reasonable for deterministic optimization, the noise in the evaluation of the incumbent, $f(\vx^+)$, causes it to be brittle in the stochastic case. In the stochastic setting, the improvement over the best mean value $\mu_{\vtheta}^+ = \max_{\vx \in \calX} 
	\mu_t(\vx; \vtheta)$ seems to be a more reasonable alternative. For this choice, we obtain a similar expression for EI,
\be
    \alpha^\textrm{EI($\mu$)}_{\vtheta}(\vx|\data_t)
    = \E[\max\{0,f(\vx) - \mu_{\vtheta}^+\}|\data_t]
= \sigma_t(\vx; \vtheta)[u\Phi(u) + \phi(u)]
    \label{eqn:eistochastic},
\ee
where $u =\frac{\mu_t(\vx; \vtheta) - \mu_{\vtheta}^+}{\sigma_t(\vx; \vtheta)}$. In this paper, we will consider a re-scaled version of this criterion:
\be
    \alpha^\textrm{EI}_{\vtheta}(\vx|\data_t)
    = \E[\max\{0,f(\vx) - \mu_{\vtheta}^+\}|\data_t]
= \nu \sigma_t(\vx; \vtheta)[\frac{u}{\nu}\Phi(\frac{u}{\nu}) + \phi(\frac{u}{\nu})]
    \label{eqn:ei}
\ee
where $\nu$ is a parameter must be estimated. Intuitively, this parameter enables us to rescale the kernel. In the deterministic case, it plays an equivalent role to multiplying the kernel by an unknown coefficient $\nu$. (For notational simplicity, we are not making dependence of EI on $\nu$ explicitly in the expression $\alpha^\textrm{EI}_{\vtheta}(\vx|\data_t)$.)
		

\subsection{An algorithm inspired by the theory}
\begin{algorithm}[t]
\caption{Bayesian Optimization with Hyper-parameter Optimization.}
\label{alg:bohyper}
\begin{algorithmic}[1]
{
    \INPUT Threshold $t_{\sigma}>0$,  percentage of reduction parameter $p \in (0, 1)$, 
    and $c_2 > c_1 > 0$.
    \INPUT Lower and upper bounds $\vtheta^L$, $\vtheta^U$ for the hyper-parameters. 
    \INPUT Initial length scale hyper-parameter $\vtheta^L \leq \vtheta_1 \leq \vtheta^U$.
    \STATE Initialize $E = 0$
    \FOR{$t=1,2,\dots$}
        \STATE Select $\vx_t = 
            \argmax_{\vx\in\calX}\alpha^\textrm{EI}_{\vtheta_t}(\vx|\data_{t-1})$ 
        \IF{$\sigma_{t-1}^2(\vx_{t} ; \theta_t) < t_{\sigma} \sigma^2 $}
            \STATE $E = E + 1$
        \ELSE
            \STATE $E = 0$
        \ENDIF
        \STATE Augment the data $\data_t = \data_{t-1} \cup (\vx_t, y_t)$
        \IF{$E=5$}
            \STATE Restrict $\vtheta^U$ such that 
            $\theta^U_i = \max\left\{\min \left[p \max_{j}\{\theta^U_j\}, 
            \theta^U_i\right], \theta^L_i\right\}$
            \STATE $E = 0$
        \ENDIF
        \STATE Choose hyper-parameters $\vtheta_{t+1}$ such that 
            $\vtheta^L \leq \vtheta_{t+1} \leq \vtheta^U$.
        \STATE Choose hyper-parameter $\nu^{\vtheta_{t+1}}_t$ such that 
            $ c_1 \xi^{\vtheta_{t+1}}_{t+1} \leq \nu^{\vtheta_{t+1}}_{t+1}
            \leq c_2 \xi^{\vtheta_{t+1}}_{t+1}$, where $\xi^{\vtheta_{t}}_{t}$
            is defined in Equation~(\ref{eqn:xi}).
    \ENDFOR
}
\end{algorithmic}
\end{algorithm}
Our main theorem (Theorem 1) establishes sufficient conditions to guarantee that the regret of a Bayesian optimisation algorithm with EI and hyper-parameter estimation, vanishes as the number of function evaluations increases.
To illustrate the value of Theorem 1, we use its guidelines to construct an algorithm in this section.

For Theorem 1 to hold, it is necessary that we adapt the hyper-parameters in a particular manner. First, we must ensure that
that there exist upper-bounds on the hyper-parameters $\vtheta$, which we group in the vector 
$\vtheta^U$ ,such that the objective function $f(\cdot)$ is an element of the reproducing kernel Hilbert space induced by this narrower kernel  $\calH_{\vtheta^U}(\calX)$ (these spaces will be explained in Section 3.4). Figure 1 (right) shows what happens to the confidence intervals as the entries of $\vtheta^U$ shrink with $t$, by narrowing the kernel.

In practice, it is difficult to assess this condition. To surmount this difficulty, we draw inspiration from~\cite{Wang:rembo}, and propose to reduce the upper bound of
the length scales $\vtheta^U$ when the algorithm becomes over confident.
In particular, we adaptively reduce $\vtheta^U$ whenever the model repeatedly samples points
of low posterior variance in comparison to the noise variance $\sigma^2$. 
Once the algorithm optimizes to the precision of the noise variance, it suffers
from a slower convergence rate. 

By choosing to lower the upper bound as proposed in Algorithm~\ref{alg:bohyper}, 
we essentially enable the algorithm to explore more, as opposed to over-exploiting a local mode. This is illustrated in Figure 1, which depicts the result of running the proposed algorithm and a standard Bayesian optimisation scheme. We will explain the experiment in more detail at the end of this section.

As $\vtheta^U$ is successively decreased, 
after a finite number of iterations, 
we can ensure that $f(\cdot) \in \calH_{\vtheta^U}(\calX)$ 
as long as there exists $\vtheta \geq \vtheta^L$
such that $f(\cdot) \in \calH_{\vtheta}(\calX)$. In practice, we advocate a conservative choice of $\vtheta^L$ whenever we have little knowledge of the range of possible values of $\vtheta$. 

Theorem 1 also imposes a condition on $\nu$.
To satisfy it,
we constrain $\nu^{\vtheta_{t}}_t$ to be in the interval
$c_1 \xi^{\vtheta_{t}}_t \leq \nu^{\vtheta_{t}}_t 
\leq c_2 \xi^{\vtheta_{t}}_t$, where 
\bea
    \label{eqn:xi}
    \xi^{\vtheta_{t}}_t = \left(\calI_{\vtheta_{t}}(\vy_{t-1} ;\vf_{t-1}) 
    + \log^{1/2}(2t^2 \pi^2/3\delta)  
    \sqrt{\calI_{\vtheta_{t}}(\vy_{t-1} ;\vf_{t-1})} + \log(t^2 \pi^2/3\delta)\right).
\eea
(The information gain will be defined in Section 3.3.) 
The careful reader may have noticed that the above condition does not match perfectly
the condition detailed in Theorem 1. Upon closer examination, however,
we see that replacing the maximum information gain $\gamma^{\vtheta}_T$ with
$\calI_{\vtheta}(\vy_T ;\vf_T)$ does not break the convergence result.
We have used $\gamma^{\vtheta}_T$ in Theorem 1 instead of $\calI_{\vtheta}(\vy_T ;\vf_T)$ simply to simplify the presentation.

In practice, we could use a number of strategies for estimating the hyper-parameters, provided
they fall within the bounds set by Theorem 1.
In particular, we could use maximum likelihood to estimate the hyper-parameters in this constrained space.
Note that the $\nu$ parameter could also be treated as a kernel hyper-parameter (kernel scale), therefore removing the need
of estimating it separately. 

Finally, the astute reader would have noticed the parameters $t_\sigma$, $p$, $c_2$ and $c_1$ in the algorithm. If we want to achieve an accuracy comparable to the noise variance, we should set $t_\sigma=1$. The other parameters simply determine how fast the algorithm converges and should be set to reasonable fixed values, e.g. $p=0.5$, $c_2=1$ and $c_1=0.001$. Provided $t_{\sigma}>0$, $p \in (0, 1)$ 
 and $c_2 > c_1 > 0$, the theory is satisfied.

If we have strong beliefs about our GP prior model, it may seem unnecessary to
estimate our parameters with Algorithm~\ref{alg:bohyper}.
When our prior belief is misplaced, however, we could fail to converge
if we were to follow the traditional probabilistic approach. 
We provide an illustration of this effect by optimize the following 
stochastic function:
$$f(x) = 2\vk_{SE}^{\theta_1}(x_1, x) + 4\vk_{SE}^{\theta_2}(x_2, x) + \epsilon$$
over the interval $[0, 1]$,
where $\theta_1 = 0.1$, $\theta_2 = 0.01$, $x_1=0.1$, $x_2=0.9$, and $\epsilon$ 
is  zero-mean Gaussian with $10^{-2}$ standard deviation.
Figure~\ref{fig:example} compares Algorithm~\ref{alg:bohyper} against standard Bayesian optimisation with the same EI function, but using slice sampling to infer the kernel hyper-parameters (without imposing the theoretical bounds on the hyper-parameters). 
We see that, in the absence of reasonable prior beliefs, 
conditions like the ones detailed in our theoretical results 
are necessary to guarantee reasonable sampling of the objective function. (The same behaviour for the plot on the left is observed if we replace slice sampling with maximum likelihood estimation of the hyper-parameters.)
While heteroskedastic GP approaches could mitigate this problem, 
there are no theoretical results to guarantee this to the best of our knowledge.

\begin{figure}[t!]
    \begin{tabular}{lll}
        \includegraphics[width=0.42\columnwidth]{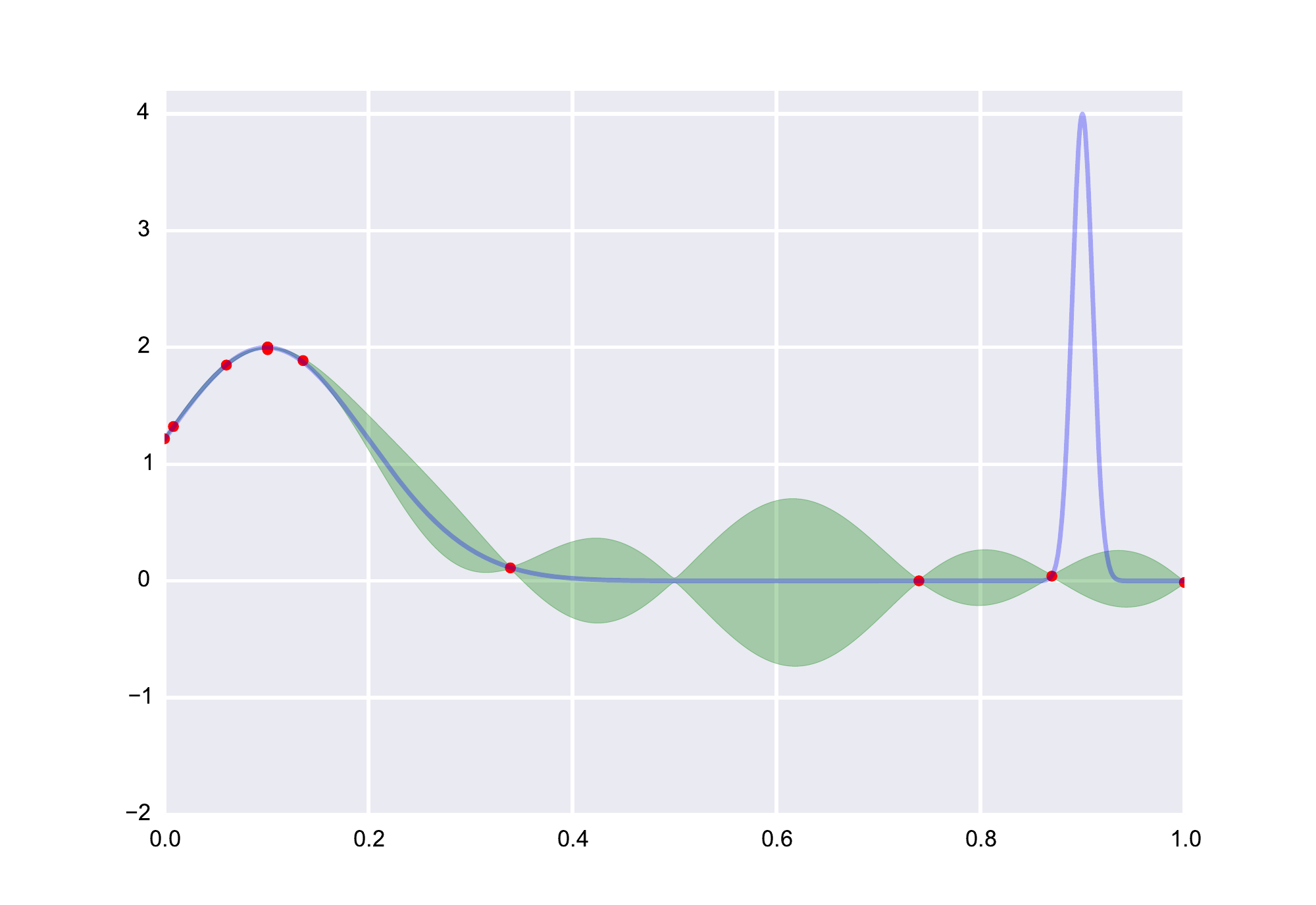}  
        & \includegraphics[width=0.42\columnwidth]{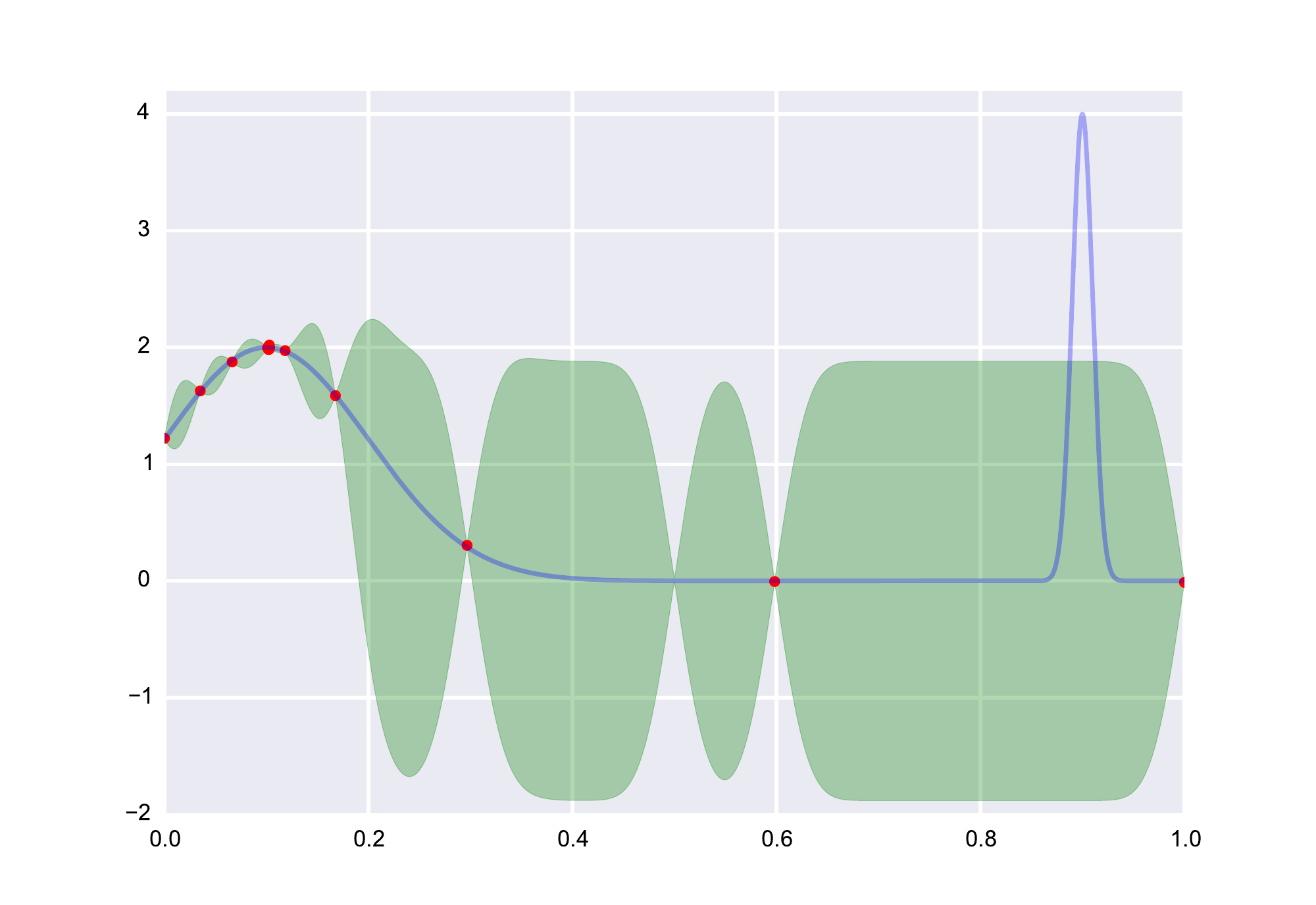} &
        \raisebox{0pt}[0pt][0pt]{\textbf{\raisebox{12ex}{$t=20$}}} \\
        \includegraphics[width=0.42\columnwidth]{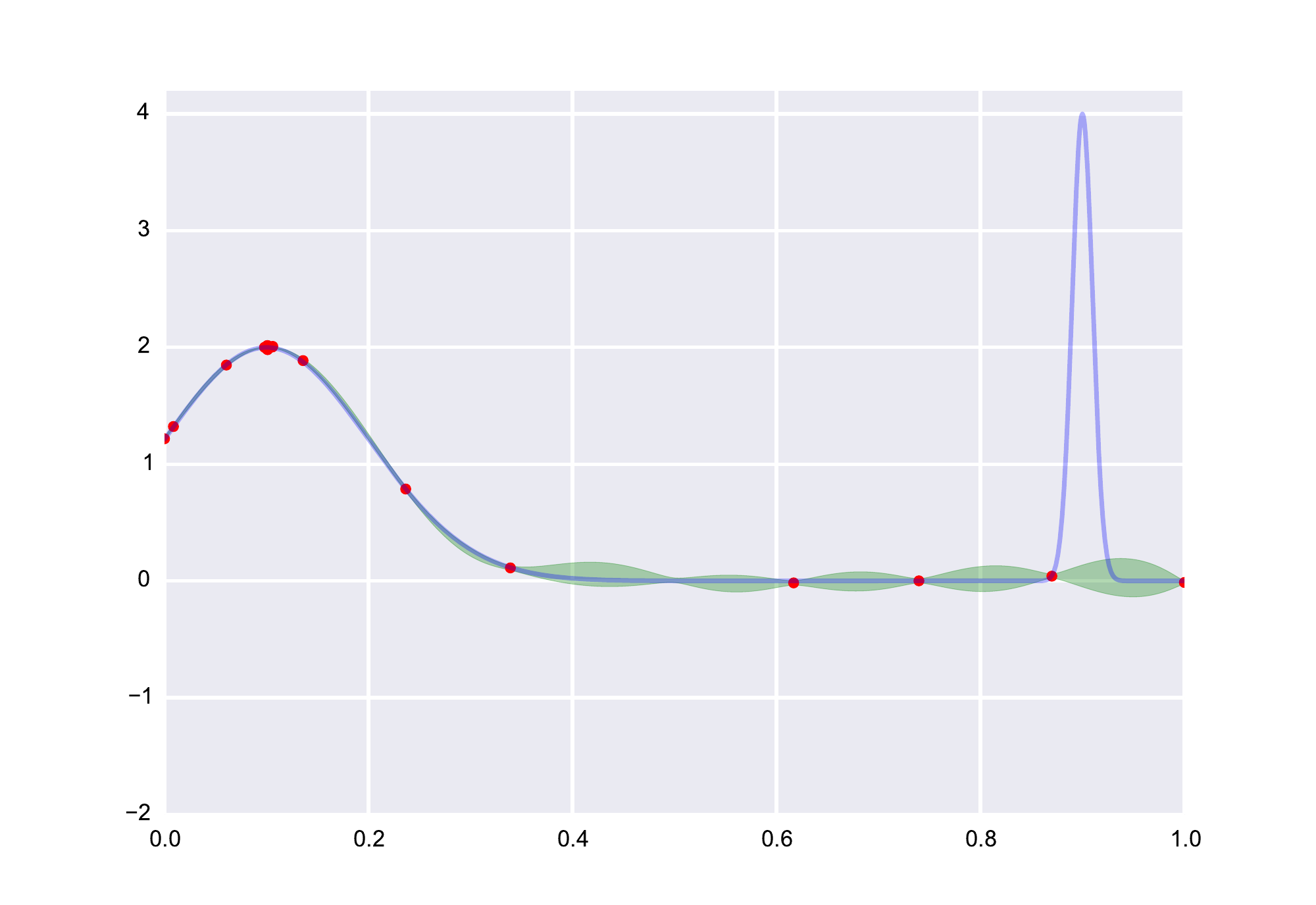}  
        & \includegraphics[width=0.42\columnwidth]{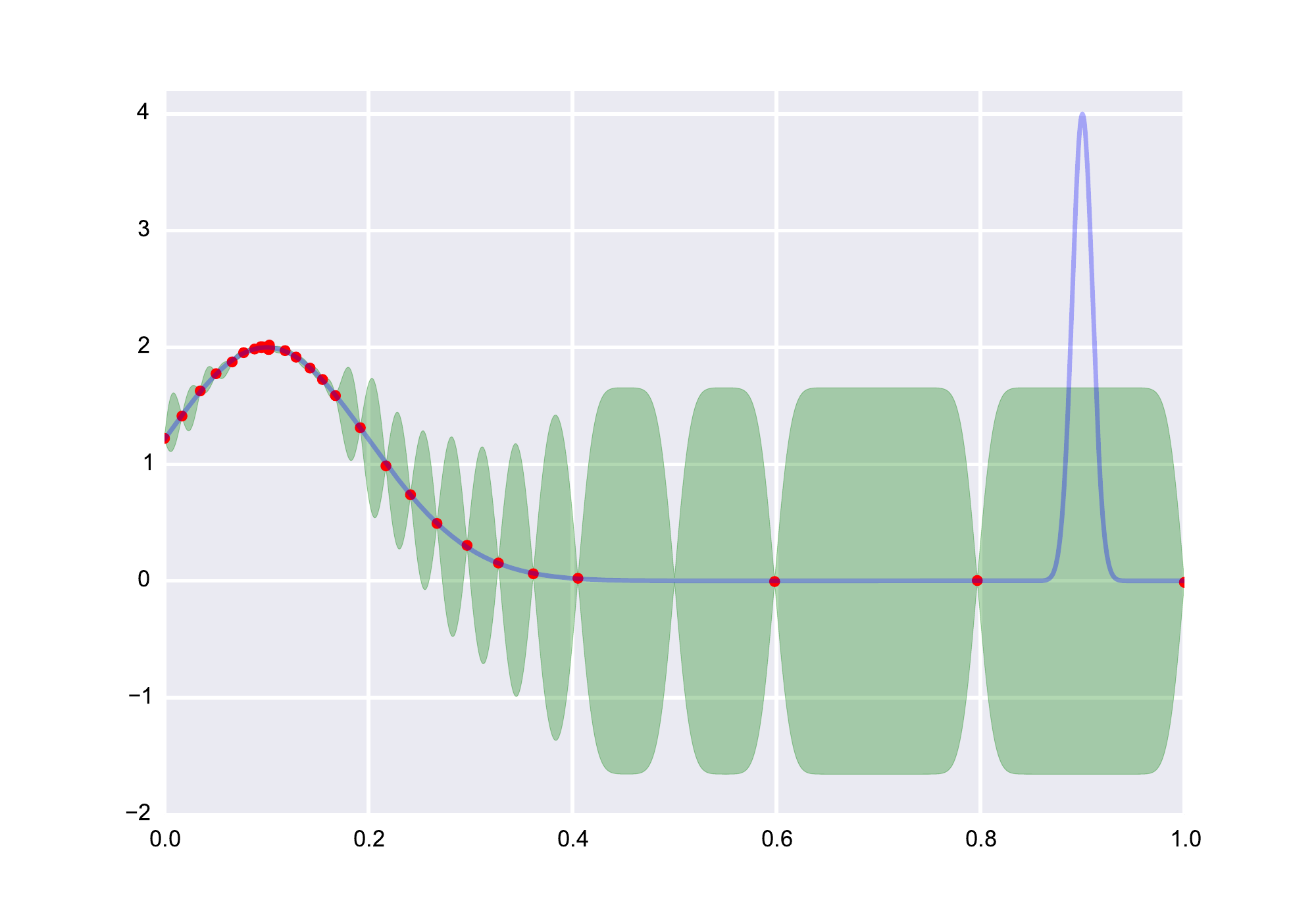} &
        \raisebox{0pt}[0pt][0pt]{\textbf{\raisebox{12ex}{$t=40$}}} \\
        \includegraphics[width=0.42\columnwidth]{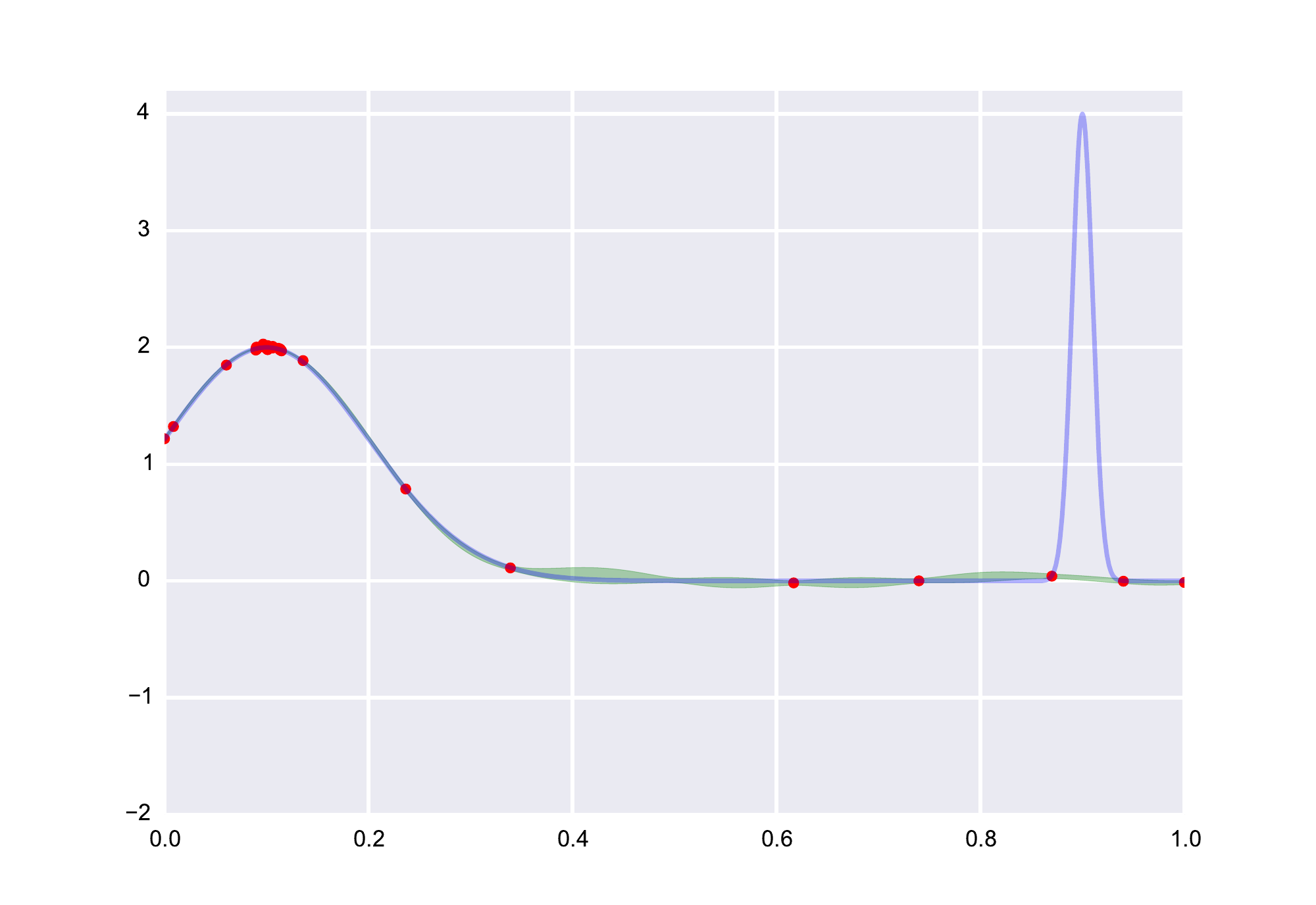}  
        & \includegraphics[width=0.42\columnwidth]{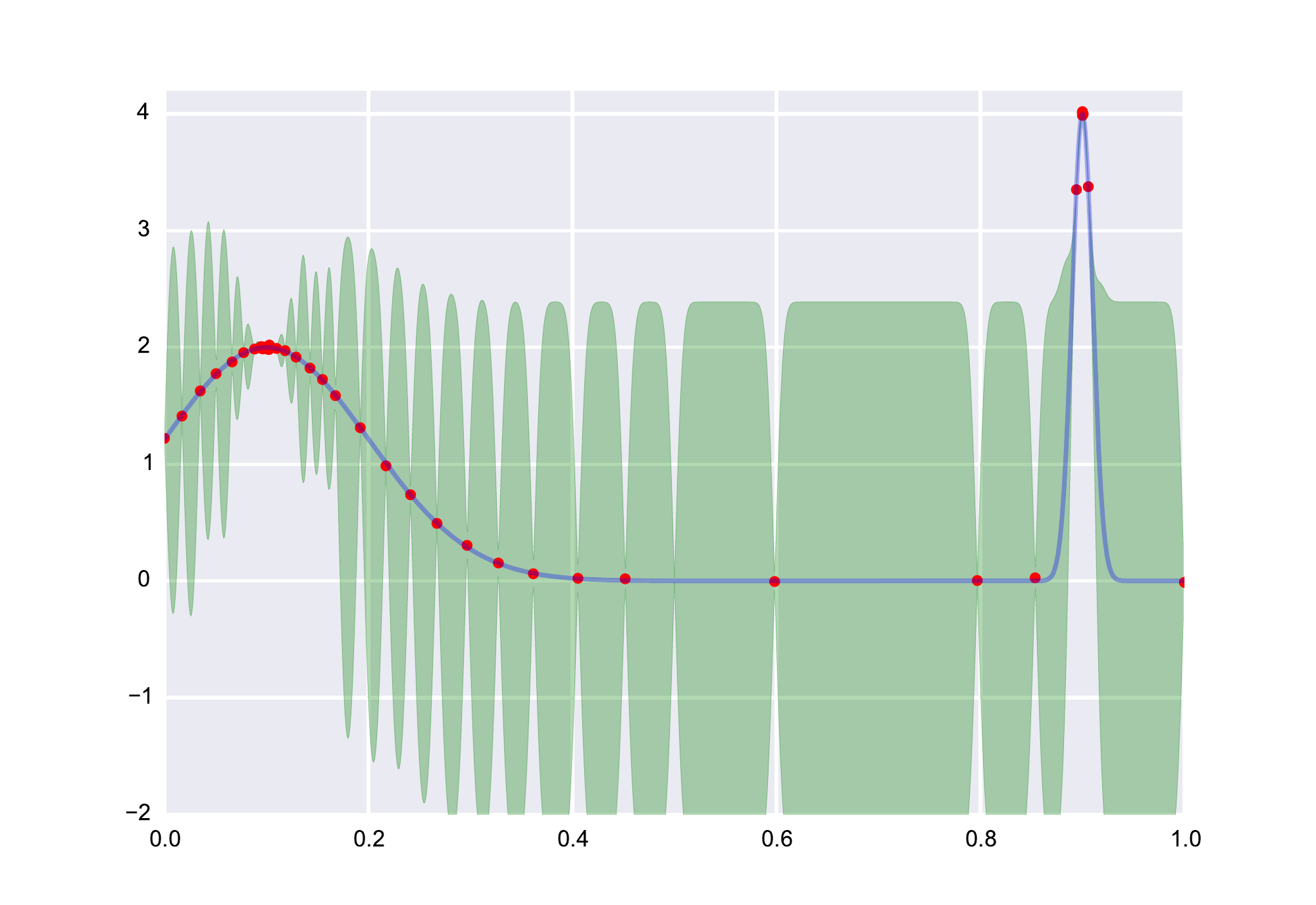} &
        \raisebox{0pt}[0pt][0pt]{\textbf{\raisebox{12ex}{$t=60$}}} \\
    \end{tabular}
    \caption{\label{fig:example}
        Convergence of EI with slice sampling over the kernel hyper-parameters [\textbf{left}] and EI using Algorithm~\ref{alg:bohyper} [\textbf{right}] at three function evaluation steps ($t$). The objective function (in blue) was constructed so that it has a trap. Unless EI with slice sampling hits the narrow optimum by random chance, it becomes too confident and fails to converge after 60 evaluations. In contrast, the confidence bounds for              
        Algorithm~\ref{alg:bohyper} can increase enabling it to sample the function in a more reasonable way and thus find the optimum.}
\end{figure}



\section{Theoretical analysis}

Our theoretical analysis uses \emph{regret} to measure convergence and \emph{information gain} to measure how informative the samples are about $f(\cdot)$. It assumes that the noise process $\epsilon_t$ is \emph{sub-Gaussian}, and that the function $f(\cdot)$ is smooth according to the \emph{reproducing kernel Hilbert space (RKHS)} associated with the GP kernel $k^{\vtheta}(\cdot,\cdot)$. 
Before presenting our main result, we briefly review these four background areas.

\subsection{Background: Regret}

As in \cite{Srinivas:2010}, we will measure the performance of the Bayesian optimization algorithm using \emph{regret}. The  
instantaneous regret at iteration $t$ is defined as $r_t = f(\vx^*) - f(\vx_t)$. The corresponding cumulative regret after $T$ iterations is $R_T = \sum_{t=1}^{T} r_t$.  
While the regret measures are never revealed to the algorithm, bounds on these enable us to assess how rapidly the algorithm is converging.

\subsection{Background: Sub-Gaussian noise}

We assume independent $\sigma$-sub-Gaussian noise. 
Formally, we say $\epsilon_t$ is $\sigma$-sub-Gaussian if there exists a $\sigma \geq 0$ such that
$$
\mathbb{E}\left[ \exp(\rho \epsilon_t) \right] \leq \exp\left( \frac{\rho^2 \sigma^2}{2} \right)
\mbox{ }
\forall \rho \in \mathbb{R}.
$$
In other works, $\epsilon_t$ is $\sigma$-sub-Gaussian if its Laplace transform is dominated by the Laplace transform of a Gaussian random variable with zero mean and variance $\sigma^2$. 
It is easy to show that if $\epsilon_t$ is sub-Gaussian, then
$\mathbb{E}[\epsilon_t] = 0$ and $\Var[\epsilon_t] \leq \sigma^2$.

There are many examples of sub-Gaussian variables, including zero-mean Gaussian random variables 
with variance $\sigma^2$, symmetric Bernoulli random variables and symmetric uniform distributions.

\subsection{Background: Information gain}

To measure the reduction in uncertainty about $f(\cdot)$ from observing $\vy_{\calA}$ for a set of sampling points $\calA \subset \calX$, we need to introduce the concept of \emph{information gain}, which is defined as the mutual information between $f(\cdot)$ and a set of observations $\vy_{\calA}$:
\be
\calI(\vy_{\calA};f(\cdot)) = H(\vy_{\calA}) - H(\vy_{\calA}|f(\cdot)).
\ee 
This concept plays a central role in the results of \cite{Srinivas:2010}, who also define the \emph{maximum information gain} $\gamma_T$ after $T$ decision rounds as
\be
\gamma_T = \max_{\calA \subset \calX: |\calA|=T} \calI(\vy_{1:t};f(\cdot)).
\ee
Note that for Gaussian distributions, 
\be
\gamma_T^{\vtheta} = \max_{\calA \subset \calX: |\calA|=T} \frac{1}{2} \log|\vI + \sigma^{-2}\vK^{\vtheta}_{\calA}|.
\ee
Our regret bounds will be given in terms of $\gamma_T^{\vtheta}$. It should perhaps be clarified that the bounds apply to $\sigma$-sub-Gaussian noise, despite the appearance of the variable $\gamma_T^{\vtheta}$ in their statements.

\subsection{Background: Reproducing kernel Hilbert spaces}

To discuss convergence, we must state formally what we mean by $f(\cdot)$ being smooth. In short, we assume that $f(\cdot)$ is an element of an RKHS with reproducing kernel $k(\cdot,\cdot)$. For an intuitive grasp of this formalisation of smoothness, we need to briefly review some RKHS fundamentals. These fundamentals are also evoked in our proofs.

Let $L_{\vx}$ be an evaluation functional: $L_{\vx} f(\cdot) = f(\vx)$. 
A (real) RKHS $\calH$ is a Hilbert space of real valued functions with the property that for each $\vx \in \calX$, the evaluation functional is bounded. That is, there exists a positive constant $M=M_{\vx}$ such that 
$
|L_{\vx} f(\cdot)| = |f(\vx)| \leq M \|f(\cdot)\|_{\calH} 
$
for all functions $f(\cdot) \in \calH$, where $\|\cdot\|_{\calH}$ denotes the norm in the Hilbert space. If $\calH$ is an RKHS, by the Riesz Representation Theorem, there exists an element $k(\cdot,\vx) \in \calH$ with the property,
\be
f(\vx) = L_{\vx} f(\cdot) = \expectAngle{k(\cdot,\vx),f(\cdot)}
\ee
for all $\vx \in \calX$ and $f(\cdot) \in \calH$, where $\expectAngle{\cdot,\cdot}$ denotes the inner product in $\calH$. 

To construct $\calH$, we consider the linear manifold $\sum_{t=1}^{n} \lambda_{t} k(\cdot,\vx_t)$ for all choices of $n$, $\lambda_1, \ldots, \lambda_n$ and $\vx_1, \ldots, \vx_n \in {\calX}$, with inner product
\be
\expectAngle{\sum_{i=1}^{n} \lambda_{i} k(\cdot,\vx_i),\sum_{j=1}^{n} \lambda_{j} k(\cdot,\vx_j)}
= \sum_{i=1}^{n} \sum_{j=1}^{n}\lambda_{i} k(\vx_i,\vx_j) \lambda_{j} 
= \| \sum_{i=1}^{n} \lambda_{i}  k(\cdot,\vx_i) \|^2_{\calH} \geq 0.
\ee
The above norm is non-negative because of the positive-definiteness of $k(\cdot,\cdot)$. Clearly, for any element $f(\cdot)$ of this linear manifold,
\be
f(\vx_j)=
\expectAngle{f(\cdot),k(\cdot,\vx_j)} = 
\expectAngle{\sum_{i=1}^{n} \lambda_{i} k(\cdot,\vx_i),k(\cdot,\vx_j)}
=\sum_{i=1}^{n} \lambda_{i} k(\vx_i,\vx_j)
\ee 
A consequence of this is that for any Cauchy sequence $\{f_n(\cdot)\}$, we have the following bound by Cauchy-Schwartz:
$
|f_n(\vx) - f(\vx)| = \expectAngle{f_n(\cdot) - f(\cdot),k(\cdot,\vx)} \leq  \|f_n(\cdot) - f(\cdot)\|_{\calH}  \|k(\cdot,\vx)\|_{\calH}.
$
In words, norm convergence implies point-wise convergence. 

The preceding steps illustrate that we can construct a unique RKHS for any positive definite kernel $k(\cdot,\cdot)$. The converse is also true (Moore-Aronszajn Theorem).

A positive definite function $k(\cdot,\cdot)$, under general conditions, has an eigenvector-eigenvalue decomposition. Suppose $k(\cdot,\cdot)$ is continuous and
$
\int_{\calX} \int_{\calX} k^{2}(\vx,\vy)d\vx d\vy < \infty,
$
then there exists an orthonormal sequence of continuous eigenfunctions $q_1(\cdot), q_2(\cdot), \ldots$ and eigenvalues $\Delta_1 \geq \Delta_2 \geq \ldots \geq 0$, with
$
k(x,y) = \sum_{\nu=1}^{\infty} \Delta_{\nu} q_{\nu}(\vx) q_{\nu}(\vy).
$
Next, consider the orthonormal expansion
$
f(\cdot) = \sum_{\nu=1}^{\infty} f_{\nu} q_{\nu}(\cdot)
$
with coefficients
$
f_{\nu} = \int_{\calX} f(\vx) q_{\nu}(\vx) d\vx.
$
It is easy to prove that $f(\cdot)$ is an element of the RKHS associated with $k(\cdot,\cdot)$ if and only if
\be
\|f(\cdot)\|^2_{\calH} = \sum_{\nu=1}^{\infty} \frac{f_{\nu}^2}{\Delta_{\nu}} < \infty.
\ee

To obtain the above finiteness condition, the coefficients $f_{\nu}$ of the expansion of $f(\cdot)$ must decay quickly. For the kernels we consider in this paper, elements of RKHS can uniformly
approximate any continuous function with compact support. 
Therefore, RKHS is well suited as a tool for analyzing
convergence behaviors of Bayesian optimization algorithms.

\subsection{Main result}

In this section, we present our regret bound and sketch its proof. For space considerations, 
detailed proofs appear in the appendix provided in the supplementary material. 

As discussed when presenting the algorithm, our theorem assumes bounds on the kernel hyper-parameters of the form $\vtheta^L \leq \vtheta_t \leq \vtheta^U$  for all $t \geq 1$ with $f(\cdot) \in \calH_{\vtheta^U}(\calX)$. While we could recall all the conditions on the kernel function necessary for our theorem to apply, we simply restrict the family of kernels to one that satisfies the conditions detailed in~\cite{Bull:2011}. Without loss of generality, we assume that $k(\vx,\vx)$= 1.

Our theorem characterising the growth in the cumulative regret $R_T$ with the number of function evaluations $T$ follows.

\begin{theorem} 
	\label{thm:main}
	Let $C_2 := \prod_{i=1}^{d} \frac{\theta^U_i}{\theta^L_i}$.
	Suppose $\vtheta^L \leq \vtheta_t \leq \vtheta^U$  for all $t \geq 1$
	and $f(\cdot) \in \calH_{\vtheta^U}(\calX)$.
	If $\left(\nu_{t}^{\vtheta}\right)^2 = \vTheta \left( \gamma^{\vtheta}_{t-1} +  
		 \log^{1/2}(2t^2 \pi^2/3\delta) 
		\sqrt{\gamma^{\vtheta}_{t-1}} + \log(t^2 \pi^2/3\delta) \right)$ for all $t \geq 1$.
	Then with probability at least $1-\delta$, the cumulative regret obeys the following rate:
	\be
	R_T = \calO\left( \beta_{T} \sqrt{\gamma^{\vtheta^L}_T T}\right), 
	\ee
	where $\beta_{T} = 2\log\left(\frac{T}{\sigma^2}\right)\gamma^{\vtheta^L}_{T-1} +  
	\sqrt{8} \log\left(\frac{T}{\sigma^2}\right) \log^{1/2}(4 T^2 \pi^2/6\delta) 
	\left(\sqrt{C_2} \|f\|_{\calH_{\vtheta^U}(\calX)} + \sqrt{\gamma^{\vtheta^L}_{T-1}}\right) + C_2\|f\|^2_{\calH_{\vtheta^U}(\calX)}.$ 
\end{theorem}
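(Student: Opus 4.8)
The plan is to adapt the regret-analysis template of GP-UCB~\cite{Srinivas:2010} and its expected-improvement variant due to Bull~\cite{Bull:2011}, replacing the exact-Gaussian-noise and noise-free arguments, respectively, by a sub-Gaussian concentration analysis, while tracking how every quantity depends on the moving hyper-parameters $\vtheta_t$. The proof decomposes $R_T=\sum_{t=1}^{T}r_t$ and bounds each instantaneous regret $r_t=f(\vx^*)-f(\vx_t)$ by a multiple of the posterior standard deviation $\sigma_{t-1}(\vx_t;\vtheta_t)$ at the queried point; the sum of these standard deviations is then controlled through the maximum information gain. Throughout, the hypothesis $f(\cdot)\in\calH_{\vtheta^U}(\calX)$ is taken as given (Algorithm~\ref{alg:bohyper} is designed to enforce it after finitely many shrink steps), and the conditions $\vtheta^L\le\vtheta_t\le\vtheta^U$ are what let us compare quantities across length-scales.

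First I would establish a high-probability confidence envelope. For a fixed $\vtheta$, the posterior mean $\mu_{t-1}(\vx;\vtheta)$ of Equation~(\ref{eqn:mean}) is a linear functional of the observations $\vy_{t-1}=\vf_{t-1}+\vepsilon_{t-1}$. Splitting it into the noise-free RKHS part and the noise part, the reproducing property and Cauchy-Schwartz control the deterministic deviation by $\|f\|_{\calH_{\vtheta}(\calX)}\,\sigma_{t-1}(\vx;\vtheta)$, exactly as in~\cite{Bull:2011}. The stochastic part is a weighted sum of the $\sigma$-sub-Gaussian noise terms; applying the moment-generating bound of Section~3.2 together with a union bound over rounds (the $\sum_t t^{-2}=\pi^2/6$ structure visible in the $\log(t^2\pi^2/3\delta)$ terms) yields, with probability at least $1-\delta$ and simultaneously for all $t$, a bound of the form $|f(\vx)-\mu_{t-1}(\vx;\vtheta)|\le \nu^{\vtheta}_t\,\sigma_{t-1}(\vx;\vtheta)$. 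The condition on $(\nu^\vtheta_t)^2$ in the statement is precisely the calibration making this envelope valid: since the prescribed order equals $\Theta\big((\sqrt{\gamma^\vtheta_{t-1}}+\sqrt{\log(t^2\pi^2/\delta)})^2\big)$, it matches the confidence width $\sqrt{\gamma^\vtheta_{t-1}}+\sqrt{\log(1/\delta)}$, and corresponds to the choice built from Equation~(\ref{eqn:xi}) up to the harmless replacement of $\gamma^\vtheta$ by $\calI_\vtheta$ noted in Section~2.2.

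Next I would convert the EI selection rule into a per-step regret bound, following Bull's improvement-function argument. On the confidence event I would prove two inequalities: a lower bound, that the EI value at (a near-optimal surrogate for) the optimiser is at least a constant times the gap $r_t$ minus a $\nu^\vtheta_t\sigma$ correction; and an upper bound, that the EI value at the maximiser $\vx_t$ is at most $\calO(\nu^{\vtheta_t}_t)\,\sigma_{t-1}(\vx_t;\vtheta_t)$, which follows from the closed form~(\ref{eqn:ei}) since $u\Phi(u)+\phi(u)$ is linearly bounded. Because $\vx_t$ maximises $\alpha^{\textrm{EI}}_{\vtheta_t}$, combining these gives $r_t=\calO\big(\nu^{\vtheta_t}_t\,\sigma_{t-1}(\vx_t;\vtheta_t)\big)$ plus an additive contribution proportional to $\|f\|_{\calH_{\vtheta^U}(\calX)}$. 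Summing over $t$, the standard information-gain inequality bounds $\sum_{t}\sigma^2_{t-1}(\vx_t;\vtheta)$ by a constant multiple of $\gamma^{\vtheta}_T$, so Cauchy-Schwartz gives $\sum_t \sigma_{t-1}(\vx_t;\vtheta_t)=\calO(\sqrt{T\gamma_T})$, the source of the $\sqrt{\gamma^{\vtheta^L}_T T}$ factor. To eliminate the data-dependent $\vtheta_t$, I would invoke the kernel-comparison lemmas of~\cite{Bull:2011}: monotonicity of the spectrum in the length-scales gives $\gamma^{\vtheta_t}_T\le\gamma^{\vtheta^L}_T$, while the RKHS norms at the scales actually used are controlled by $\|f\|_{\calH_{\vtheta^U}(\calX)}$, the volume ratio $C_2=\prod_{i=1}^{d}\theta^U_i/\theta^L_i$ entering through the change-of-length-scale factor in the spectral representation of Section~3.4. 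Substituting these, with $\nu^{\vtheta_t}_t$ at its prescribed order, collapses the per-step constants into the stated $\beta_T$ and completes the bound.

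The hardest part will be the simultaneous control across the continuum of admissible hyper-parameters: unlike the fixed-kernel setting, the confidence event and both the information-gain and RKHS-norm comparisons must hold uniformly for the data-dependent sequence $\vtheta_t\in[\vtheta^L,\vtheta^U]$. Establishing the kernel-comparison inequalities with the correct $C_2=\prod_{i=1}^{d}\theta^U_i/\theta^L_i$ dependence---so that membership $f\in\calH_{\vtheta^U}(\calX)$ transfers into usable bounds at the scales the algorithm actually selects---is the technical crux, and is exactly where the device of shrinking $\vtheta^U$ until $f\in\calH_{\vtheta^U}(\calX)$ does the essential work.
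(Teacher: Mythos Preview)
Your high-level template---confidence envelope, Bull-style EI sandwich, summing posterior variances via information gain, and kernel-comparison lemmas to replace $\vtheta_t$ by $\vtheta^L,\vtheta^U$---matches the paper's proof. But two points are off, one conceptual and one a genuine gap.

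First, you misread the role of $\nu_t^{\vtheta}$. It is \emph{not} the confidence width; it is the user-chosen scaling inside the EI criterion (Equation~(\ref{eqn:ei})). The paper establishes the envelope $|f(\vx)-\mu_{t-1}(\vx;\vtheta_t)|\le \varphi_t^{\vtheta_t}\sigma_{t-1}(\vx;\vtheta_t)$ with a separate quantity $\varphi_t^{\vtheta}$ (Proposition~\ref{prop:bound}), derived by bounding $\|\mu_{t-1}-f\|_{\calK_{t-1}^{\vtheta}}$ via the identity of Srinivas' Lemma~7.2 together with sub-Gaussian concentration. The hypothesis $(\nu_t^{\vtheta})^2=\Theta(\cdots)$ is there so that $\varphi_t^{\vtheta_t}/\nu_t^{\vtheta_t}=O(1)$, which makes the ratio $\tau(-\varphi/\nu)/\tau(\varphi/\nu)$ in the EI sandwich (Lemma~\ref{lem:eikey}) a constant $C_3$. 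So the condition on $\nu_t$ is not ``the calibration making the envelope valid''; it is what makes the EI lower bound a constant multiple of the improvement.

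Second, and more substantively, your per-step argument does not close. You claim a lower bound ``EI at the optimiser $\ge c\,r_t-\nu_t\sigma$ correction'', but the EI sandwich only gives $\alpha^{\textrm{EI}}(\vx^*)\ge c\,I_t(\vx^*)=c\max\{0,f(\vx^*)-\mu^+_{\vtheta_t}\}$, which is tied to the incumbent $\mu^+_{\vtheta_t}$, not to $f(\vx_t)$. After decomposing $r_t=(f(\vx^*)-\mu^+_{\vtheta_t})-(f(\vx_t)-\mu^+_{\vtheta_t})$, you still need to control $\mu^+_{\vtheta_t}-f(\vx_t)$, hence $\mu^+_{\vtheta_t}-\mu_{t-1}(\vx_t;\vtheta_t)$, and this is \emph{not} handled by the confidence envelope alone (in Bull's noise-free setting the incumbent is a function value and this term is trivially non-positive; here it is not). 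The paper supplies a new lemma (Lemma~\ref{lem:10}) for exactly this: using that $\vx_t$ maximises EI and comparing to the EI at the mean-maximiser $\vx_t^+$, one shows $|\mu_{t-1}(\vx_t;\vtheta_t)-\mu^+_{\vtheta_t}|\le \sqrt{\log((t-1+\sigma^2)/\sigma^2)}\,\nu_t\,\sigma_{t-1}(\vx_t;\vtheta_t)$. This step is the source of the $\log(T/\sigma^2)$ factors in $\beta_T$, so your sketch cannot reproduce the stated rate without it. Incidentally, the ``hardest part'' is not uniformity over the continuum of $\vtheta$: the paper applies the concentration at each $\vtheta_t$ and then uses the deterministic monotonicity lemmas (Lemmas~\ref{lem:les} and~\ref{lem:small}) to pass to $\vtheta^L$ and $\vtheta^U$; no uniform-in-$\vtheta$ event is needed.
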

Our result is analogous to Theorem 3 of~\cite{Srinivas:2010} 
which proves convergence rates for the
GP-UCB algorithm in the agnostic setting.
Their result, however, does not allow for the estimation of hyper-parameters.
In addition, our algorithm does not require explicit knowledge of 
the RKHS norm of the objective function while GP-UCB does require this.

Using the results of Srinivas \emph{et al.}\, we can further detail these rates as follows.
\begin{theorem}[Theorem 5 of~\cite{Srinivas:2010}]
	Let $\calX \subseteq \mathbb{R}^d$ be compact and convex, $d \in \mathbb{N}$.
	Assume the kernel function satisfies $k(\vx, \vx') \leq 1$.
	\begin{enumerate}
		\item Exponential spectral decay. For the squared Exponential kernel:
		$\gamma^{\vtheta}_T = \mathcal{O}\left((\log T)^{d+1}\right).$
		\item Power law spectral decay. For Mat\'ern kernels with degree of freedom 
		$\nu > 1$:
		$\gamma^{\vtheta}_T = \mathcal{O}\left(T^{d(d+1)/(2\nu + d(d+1))} \log T\right).$
	\end{enumerate}
\end{theorem}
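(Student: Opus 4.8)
Since this theorem is quoted verbatim as Theorem~5 of \cite{Srinivas:2010}, the plan is to reproduce the spectral argument underlying it. The starting point is the Gaussian-case identity $\gamma_T^{\vtheta} = \max_{\calA \subset \calX, |\calA|=T} \frac{1}{2}\log|\vI + \sigma^{-2}\vK_\calA^{\vtheta}|$ recorded in Section~3.3. Writing the eigenvalues of the Gram matrix $\vK_\calA^{\vtheta}$ as $\hat\Delta_1 \geq \cdots \geq \hat\Delta_T$, we have $\log|\vI + \sigma^{-2}\vK_\calA^{\vtheta}| = \sum_{t=1}^{T}\log(1 + \sigma^{-2}\hat\Delta_t)$, and since $\sum_t \hat\Delta_t = \tr \vK_\calA^{\vtheta} = T$ (using the normalisation $k(\vx,\vx)=1$), the first step is to trade the data-dependent spectrum $\{\hat\Delta_t\}$ against the ordered Mercer eigenvalues $\Delta_1 \geq \Delta_2 \geq \cdots$ of the kernel integral operator on $\calX$, which were introduced in Section~3.4.

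The core step is a bounding lemma (Theorem~8 of \cite{Srinivas:2010}) obtained by discretising $\calX$ on a fine grid and splitting the spectrum at a cutoff $T_* \in \{1,\dots,T\}$:
\be
\gamma_T^{\vtheta} = \mathcal{O}\!\left( T_* \log T + g(T)\sum_{s > T_*}\Delta_s \right),
\ee
where the head term bounds the contribution of the top $T_*$ eigendirections (each $\log(1+\sigma^{-2}\hat\Delta_t) = O(\log T)$, since $\hat\Delta_t \leq T$), and the tail term uses $\log(1+x)\leq x$ together with the operator tail $\sum_{s>T_*}\Delta_s$; the prefactor $g(T)$ is a polynomial in $T$ produced by the discretisation. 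Establishing this lemma cleanly is the main obstacle: one must argue that no placement of $T$ points can load more than $O(T_*)$ units of effective information onto the leading directions, and that the residual is governed by the population tail rather than by the empirical $\{\hat\Delta_t\}$. This requires a uniform comparison of the empirical Gram spectrum with the operator spectrum, for which we invoke classical eigenvalue estimates and control the grid resolution so that the discretisation error is negligible.

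The final step inserts the kernel-specific Mercer decay rates and optimises over $T_*$. For the squared-exponential kernel~(\ref{eqn:se}) on compact $\calX \subset \real^d$, the eigenvalues decay geometrically, $\Delta_s = O(\exp(-c\,s^{1/d}))$, so the tail is exponentially small; taking $T_* = \Theta((\log T)^d)$ forces $g(T)\sum_{s>T_*}\Delta_s = o(1)$ while the head contributes $T_* \log T = O((\log T)^{d+1})$, giving the first claim. For the Mat\'ern kernel~(\ref{eqn:matern}) with $\nu > 1$, the eigenvalues obey a power law, $\Delta_s = O(s^{-(2\nu+d)/d})$, so $\sum_{s>T_*}\Delta_s = O(T_*^{-2\nu/d})$; balancing the head term $T_*\log T$ against the discretisation-inflated tail term and optimising jointly over $T_*$ and the grid resolution yields the polynomial rate $\gamma_T^{\vtheta} = \mathcal{O}(T^{d(d+1)/(2\nu+d(d+1))}\log T)$, where the factor $d(d+1)$ in the exponent arises from the interaction between the $d$-dimensional discretisation and the polynomial eigenvalue tail. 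The underlying Mercer decay estimates, which drive both cases, follow from standard operator-theoretic results on the spectra of these kernels and are the one external ingredient we would cite rather than re-derive.
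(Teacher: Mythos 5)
You should first note that the paper contains no proof of this statement: it is imported verbatim as Theorem 5 of Srinivas \emph{et al.}, so the only ``proof'' in the paper is the citation itself. Your reconstruction of the cited argument is largely faithful in outline: Srinivas \emph{et al.} do prove it by combining a generic bound (their Theorem 8) of the shape $\gamma^{\vtheta}_T = \calO\bigl(T_*\log(T\,n_T) + g(T)\sum_{s>T_*}\Delta_s\bigr)$, obtained by discretising $\calX$ into $n_T$ grid points, with the known Mercer decay rates you quote (exponential decay $\Delta_s = \calO(\exp(-c\,s^{1/d}))$ for the squared exponential, from Seeger \emph{et al.}, and the power law $\Delta_s = \calO(s^{-(2\nu+d)/d})$ for Mat\'ern, from Widom's theorem), then optimising jointly over $T_*$ and the grid exponent; one can check that your head term, tail term, and grid-error term do all balance precisely at the exponent $d(d+1)/(2\nu+d(d+1))$, so the final bookkeeping is sound.

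The step that would fail as you describe it is your proposed justification of the key lemma via ``a uniform comparison of the empirical Gram spectrum with the operator spectrum'' using ``classical eigenvalue estimates.'' No such comparison is available here: the $T$ points realising $\gamma^{\vtheta}_T$ are chosen adversarially, by maximisation over all size-$T$ subsets, not sampled i.i.d.\ from the measure defining the operator spectrum, so empirical spectral concentration results do not apply, and the empirical spectrum can in fact be badly mismatched with the population one (e.g.\ all $T$ points clustered where a single tail eigenfunction peaks). Srinivas \emph{et al.} avoid this entirely with a deterministic argument: split the Mercer expansion on the discretised set at $T_*$, bound the log-determinant of the rank-$T_*$ head by $T_*\log(\cdot)$ since a rank-$T_*$ perturbation contributes at most $T_*$ nontrivial log-factors, and bound the tail via $\log\det(\vI+\vA) \leq \tr(\vA)$ together with the normalisation of the discrete eigenfunctions ($\sum_{\vx \in D_T}\phi_s(\vx)^2 = n_T$, hence $\sup_{\vx}\phi_s(\vx)^2 \leq n_T$), which is exactly the origin of your polynomial prefactor $g(T) \approx T\,n_T$. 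Notice that you actually state this rank/trace mechanics correctly in your second paragraph; the repair is simply to delete the empirical-vs-population spectral comparison and let the trace bound carry the lemma, after which your outline coincides with the cited proof.
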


The proof of Theorem~\ref{thm:main} is provided in the appendix. We sketch the main ideas here. 
Our proof methodology is inspired by the works of~\cite{Srinivas:2010} and~\cite{Bull:2011}.

We start the proof-sketch by considering the instantaneous regret:
	\begin{eqnarray}
		r_t &=& f(\vx^*) - f(\vx_t) \nonumber \\
		&=& ( f(\vx^*) - \mu_{\vtheta_{t}}^+) - 
			(f(\vx_t) - \mu_{\vtheta_{t}}^+) \nonumber  
	\end{eqnarray}
where
$\mu^{+}_{\vtheta_t} = \max_{\vx \in \calX} \mu_{t-1} (\vx; \vtheta_t)$ and $\vx_t = \argmax_{\vx \in \calX} 
\alpha^\textrm{EI}_{\vtheta_{t}}(\vx|\data_{t-1})$. T

The first challenge of the proof is to bound the difference between 
the posterior mean of the GP and the objective function. 
Such a bound allows us to quantify the difference between our belief about the
objective function and the true objective.
Specifically, we bound 
$|\mu_{t-1}(\vx, \vtheta_{t}) - f(\vx)|$ $\forall \vx \in \calX$
in each iteration with high probability.
By way of the Cauchy-Schwarz inequality, 
\bea
	|\mu_{t-1}(\vx, \vtheta_{t}) - f(\vx)| 
	&\leq& \left(\calK^{\vtheta_{t}}_{t-1}(\vx, \vx) \right)^{1/2}\|\mu_{t-1}(\cdot; \vtheta) - 
		f(\cdot)\|_{\calK_{t-1}^{\vtheta_t}} \nonumber \\
	&\leq& \sigma_{t-1}(\vx; \vtheta_{t})\|\mu_{t-1}(\cdot; \vtheta) - 
		f(\cdot)\|_{\calK_{t-1}^{\vtheta_t}}.\nonumber 
\eea
The first part of our proof (Section~\ref{sec:conc} in the appendix) 
is then dedicated to providing a probabilistic bound for
$\|\mu_{t-1}(\cdot; \vtheta) - f(\cdot)\|_{\calK_{t-1}^{\vtheta_t}}$
by means of concentration inequalities.
In more detail, Lemma~\ref{lem:c1} and~\ref{lem:c2} bound separate terms
that appear in $\|\mu_{t-1}(\cdot; \vtheta) - f(\cdot)\|_{\calK_{t-1}^{\vtheta_t}}$
using properties of reproducing kernel Hilbert spaces and concentration results for sub-Gaussian random variables \cite{Hsu:2012}.
Proposition~\ref{prop:bound} combines the aforementioned results via a union bound.

The second challenge of the proof is to relate EI with quantities that are easier 
to analyse, such as the posterior variance and the improvement function  $I^{\vtheta}_t(\vx) = \max\{0,f(\vx) - \mu_{\vtheta}^+\}$. To bound the instantaneous regret, we observe that 
$$(f(\vx^*) - \mu_{\vtheta_{t}}^+) - 
(f(\vx_t) - \mu_{\vtheta_{t}}^+)
\leq I^{\vtheta}_t(\vx)  +
\left[ (\mu_{\vtheta_{t}}^+ - \mu_{t-1}(\vx_t; \vtheta_{t})) + 
\varphi_{t}^{\vtheta_{t}} \sigma_{t-1}(\vx_t; \vtheta_{t}) \right].$$
(Here $\varphi_{t}^{\vtheta_{t}}$ is a quantity that arises 
in the concentration bound of Proposition~\ref{prop:bound}.) The improvement function is upper-bounded by 
a constant times the expected improvement $\alpha^\textrm{EI}_{\vtheta_{t}}(\vx_t|\data_{t-1})$ via Lemma~\ref{lem:eikey}, which builds on results by~\cite{Bull:2011}. The expected improvement is in turn bounded by a multiple of the posterior standard deviation $\sigma_{t-1}(\vx_t; \vtheta_{t})$.

 Next, we turn our attention to the term 
$(\mu_{\vtheta_{t}}^+ - \mu_{t-1}(\vx_t; \vtheta_{t}))$. We bound this term in Lemma~\ref{lem:10}, which states that
$\left|\mu_{t-1}(\vx_{t}; \vtheta_t) - \mu_{\vtheta_{t}}^+\right| \leq
	\sqrt{\log(t-1+\sigma^2) - \log(\sigma^2)} \nu \sigma_{t-1}(\vx_t; \vtheta_t)$.

By now, we have bounded the instantaneous regret in each iteration
by a multiple of the posterior variance $\sigma_{t-1}^2(\vx; \vtheta^L)$.  Finally, we can sum over $t$ and use
Lemma~\ref{lem:54}, which states that
	$\sum_{t=t_0}^{T} \sigma_{t-1}^2(\vx; \vtheta^L) 
	\leq \frac{2}{\log(1+\sigma^{2})} \gamma^{\vtheta^L}_T$, and subsequently bound the cumulative regret by the maximal information gain, which as we said is related to the posterior variance of the GP (Lemma 5).
To accommodate different hyper-parameters, we make use of Lemma~\ref{lem:small} from~\cite{Bull:2011}.

\section{Conclusion}

Despite the rapidly growing literature on Bayesian optimisation and the proliferation of software packages that learn the kernel hyper-parameters, to the best of our knowledge, only Bull \cite{Bull:2011} and us have attacked the question of 
convergence of GP-based Bayesian optimisation with unknown hyper-parameters. Bull's results focused on deterministic objective functions. Our new results apply to the abundant class of noisy objective functions. 

\newpage
{\small
\bibliography{bayesopt}
\bibliographystyle{plain}
}

\newpage
\appendix
\newpage
\section{Proofs}

\subsection{Concentration}
\label{sec:conc}
\begin{lemma}
	\label{lem:subb}
	If $\epsilon_T$ is $\sigma$-sub-Gaussian, 
	then $\mathbb{P}(|\epsilon_T| \geq a) \leq 2\exp\left(-\frac{a^2}{2\sigma^2} \right)$
	$\forall a > 0$.
\end{lemma}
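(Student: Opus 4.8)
The plan is to use the standard Chernoff bounding technique, exploiting the exponential-moment control afforded by sub-Gaussianity. First I would handle the upper tail. For any $\rho > 0$, Markov's inequality applied to the monotone transform $x \mapsto \exp(\rho x)$ gives
$$\mathbb{P}(\epsilon_T \geq a) = \mathbb{P}\left(e^{\rho \epsilon_T} \geq e^{\rho a}\right) \leq e^{-\rho a}\, \mathbb{E}\left[e^{\rho \epsilon_T}\right] \leq \exp\left(\tfrac{\rho^2\sigma^2}{2} - \rho a\right),$$
where the last inequality is precisely the sub-Gaussian hypothesis. Since this holds for every $\rho > 0$, I would optimise the bound over $\rho$: the exponent $\tfrac{\rho^2\sigma^2}{2} - \rho a$ is a convex quadratic minimised at $\rho = a/\sigma^2$, which yields $\mathbb{P}(\epsilon_T \geq a) \leq \exp\left(-a^2/2\sigma^2\right)$.

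Next I would obtain the matching lower-tail bound by symmetry. Because the defining inequality $\mathbb{E}[\exp(\rho \epsilon_T)] \leq \exp(\rho^2\sigma^2/2)$ is assumed to hold for \emph{all} $\rho \in \mathbb{R}$ (not merely $\rho \geq 0$), the random variable $-\epsilon_T$ is itself $\sigma$-sub-Gaussian. Applying the result of the previous paragraph to $-\epsilon_T$ gives $\mathbb{P}(\epsilon_T \leq -a) = \mathbb{P}(-\epsilon_T \geq a) \leq \exp\left(-a^2/2\sigma^2\right)$. A union bound over the two disjoint events $\{\epsilon_T \geq a\}$ and $\{\epsilon_T \leq -a\}$ then delivers the claimed two-sided estimate $\mathbb{P}(|\epsilon_T| \geq a) \leq 2\exp\left(-a^2/2\sigma^2\right)$.

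There is no substantive obstacle here — the argument is the textbook Chernoff bound — but two small points warrant care. The optimising choice $\rho = a/\sigma^2$ requires $\sigma > 0$; the degenerate case $\sigma = 0$ forces $\epsilon_T = 0$ almost surely (its moment generating function is identically $1$), so that $\mathbb{P}(|\epsilon_T| \geq a) = 0$ for every $a > 0$ and the bound holds trivially. I would also stress that one need not separately establish $\mathbb{E}[\epsilon_T] = 0$ for this lemma; it suffices that the exponential-moment hypothesis is available for both signs of $\rho$, which is exactly what the stated definition supplies, so the lower-tail argument goes through without any additional assumptions.
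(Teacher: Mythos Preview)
Your proposal is correct and follows essentially the same route as the paper: Markov's inequality on $e^{\rho\epsilon_T}$, the sub-Gaussian moment bound, the optimising choice $\rho = a/\sigma^2$, and then a symmetry-plus-union-bound step for the two-sided tail. Your treatment is in fact slightly more careful than the paper's, which simply writes ``by symmetry'' without spelling out that $-\epsilon_T$ is $\sigma$-sub-Gaussian and does not comment on the degenerate case $\sigma = 0$.
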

\begin{proof}
By Markov's inequality, we can see that $\forall \rho > 0$
\begin{eqnarray}
\mathbb{P} \left( \epsilon_T \geq a\right) 
= \mathbb{P}\left[\exp(\rho\epsilon_T) \geq \exp(\rho a) \right] 
\leq \frac{\mathbb{E}\left[\exp(\rho\epsilon_T)\right]}{\exp(\rho a)} 
\leq \exp\left(\frac{\sigma^2\rho^2}{2} - \rho a\right). \nonumber
\end{eqnarray}
By taking $\rho = \frac{a}{\sigma^2}$, we have that 
$\mathbb{P}(\epsilon_T \geq a) \leq \exp\left(-\frac{a^2}{2\sigma^2} \right)$.
By symmetry, we have that 
$\mathbb{P}(|\epsilon_T| \geq a) \leq 2\exp\left(-\frac{a^2}{2\sigma^2} \right)$.
\end{proof}

\vspace{4mm} 
\begin{lemma}
	\label{lem:sub2}
	Let $\epsilon_t$ be independently $\sigma$-sub-Gaussian with $t \in \{1, \cdots, T\}$. 
	Then, $\sum_{t=1}^{T} \lambda_t \epsilon_t$ is $(\|\vlambda\|\sigma)$-sub-Gaussian.
\end{lemma}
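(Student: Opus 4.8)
The plan is to prove that a weighted sum of independent $\sigma$-sub-Gaussian random variables is itself sub-Gaussian, with parameter $\|\vlambda\|\sigma$, by directly manipulating the moment generating function (Laplace transform) using the defining inequality and independence. The statement we want is that $\mathbb{E}\left[\exp\left(\rho \sum_{t=1}^T \lambda_t \epsilon_t\right)\right] \leq \exp\left(\frac{\rho^2 \|\vlambda\|^2 \sigma^2}{2}\right)$ for all $\rho \in \mathbb{R}$, which is exactly the sub-Gaussian condition with parameter $\|\vlambda\|\sigma$.

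First I would write the Laplace transform of the sum and use independence to factor it into a product:
\be
\mathbb{E}\left[\exp\left(\rho \sum_{t=1}^T \lambda_t \epsilon_t\right)\right]
= \prod_{t=1}^T \mathbb{E}\left[\exp(\rho \lambda_t \epsilon_t)\right].
\nonumber
\ee
Next, for each factor I would apply the sub-Gaussian hypothesis on $\epsilon_t$ with the scalar $\rho \lambda_t$ playing the role of the dummy variable in the definition. Since the defining inequality holds for all real arguments, substituting $\rho\lambda_t$ gives $\mathbb{E}[\exp(\rho\lambda_t \epsilon_t)] \leq \exp\left(\frac{\rho^2 \lambda_t^2 \sigma^2}{2}\right)$. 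Multiplying these bounds over $t$ and collecting the exponent yields $\exp\left(\frac{\rho^2 \sigma^2}{2}\sum_{t=1}^T \lambda_t^2\right) = \exp\left(\frac{\rho^2 \sigma^2 \|\vlambda\|^2}{2}\right)$, which is the desired bound and establishes sub-Gaussianity with parameter $\|\vlambda\|\sigma$.

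There is really no deep obstacle here; the argument is a short and standard computation. The only points requiring minor care are: ensuring the independence assumption is correctly invoked to split the expectation of the product into a product of expectations, and verifying that the sub-Gaussian definition is applied with the \emph{correct} substituted argument (namely $\rho\lambda_t$ rather than $\rho$), so that the $\lambda_t^2$ factors appear and sum to $\|\vlambda\|^2$. I would also note that the result holds for every $\rho \in \mathbb{R}$ since the original definition quantifies over all $\rho$, so no restriction on the sign or magnitude of $\rho$ or the $\lambda_t$ is needed.
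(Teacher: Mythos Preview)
Your proposal is correct and follows exactly the same argument as the paper: factor the moment generating function by independence, apply the sub-Gaussian bound to each term with argument $\rho\lambda_t$, and collect the exponents to obtain $\exp(\rho^2\sigma^2\|\vlambda\|^2/2)$. There is no substantive difference between your write-up and the paper's proof.
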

\begin{proof}
	For all $\rho \in \mathbb{R}$, we have 
	\begin{eqnarray}
	\mathbb{E}\left[\exp\left(\rho \sum_{t=1}^{T} \lambda_t \epsilon_t\right) \right] 
	&=& \mathbb{E}\left[\prod_{t=1}^{T}\exp\left(\rho \lambda_t \epsilon_t\right) \right] 
	= \prod_{t=1}^{T}\mathbb{E}\left[\exp\left(\rho \lambda_t  \epsilon_t\right) \right] \nonumber \\
	&\leq& \prod_{t=1}^{T} \exp\left( \frac{\rho^2 \lambda_t^2 \sigma^2}{2} \right)
	= \exp\left( \frac{ \rho^2 \sigma^2 \sum_{t=1}^{T} \lambda_t^2 }{2} \right). \nonumber
	\end{eqnarray}
\end{proof}

%

To shorten the notation, in the remainder of this paper, 
we will use $\vf_{T}$ to denote the vector $(f_1,\ldots,f_T) = (f(\vx_1), \ldots, f(\vx_T))$ and, similarly, we use $\vepsilon_T$ in place of $\vepsilon_{1:T}$ and $\vy_T$ in place of $\vy_{1:T}$.

\vspace{4mm}
\begin{lemma}
	\label{lem:c1}
	$\vf^T_{T}(\vK^{\vtheta}_T + \sigma^2 \vI)^{-1}\vepsilon_T$ is 
	 $\left( \rnorm{f} \right)$-sub-Gaussian.
\end{lemma}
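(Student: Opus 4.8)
The plan is to recognise the quantity as a \emph{fixed} linear combination of the independent sub-Gaussian noise variables and then invoke Lemma~\ref{lem:sub2}, reducing everything to a bound on the Euclidean norm of the coefficient vector. Since $\vf_T$ and $\vK^{\vtheta}_T$ depend only on the evaluation locations $\vx_{1:T}$, the only randomness sits in $\vepsilon_T$. Writing $\vlambda = (\vK^{\vtheta}_T + \sigma^2\vI)^{-1}\vf_T$ and using that $(\vK^{\vtheta}_T + \sigma^2\vI)^{-1}$ is symmetric, I would rewrite
$$\vf^T_{T}(\vK^{\vtheta}_T + \sigma^2 \vI)^{-1}\vepsilon_T = \vlambda\T \vepsilon_T = \sum_{t=1}^{T} \lambda_t \epsilon_t.$$
By Lemma~\ref{lem:sub2} this sum is $(\|\vlambda\|\sigma)$-sub-Gaussian, so it suffices to prove $\|\vlambda\|\sigma \leq \rnorm{f}$, i.e.
$$\sigma^2\,\vf_T\T (\vK^{\vtheta}_T + \sigma^2\vI)^{-2}\vf_T \;\leq\; \|f\|^2_{\calH_{\vtheta}(\calX)}.$$

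To establish this inequality I would diagonalise $\vK^{\vtheta}_T = U\Lambda U\T$ with eigenvalues $\lambda_i \geq 0$ and set $g = U\T\vf_T$, so that the left-hand side becomes $\sigma^2 \sum_i g_i^2/(\lambda_i + \sigma^2)^2$. The key RKHS ingredient is the standard minimal-norm interpolant bound $\vf_T\T (\vK^{\vtheta}_T)^{+}\vf_T \leq \|f\|^2_{\calH_{\vtheta}(\calX)}$, which one may obtain directly from Bessel's inequality applied to the orthonormal system $\psi_i/\sqrt{\lambda_i}$ with $\psi_i = \sum_j (U)_{ji}\,k^{\vtheta}(\cdot,\vx_j)$ (here $\langle \psi_i,\psi_{i'}\rangle = \lambda_i\delta_{ii'}$ and $g_i = \langle f,\psi_i\rangle$ by the reproducing property). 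The same reproducing-property computation shows $g_i = \langle f,\psi_i\rangle = 0$ whenever $\lambda_i = 0$, since then $\psi_i \equiv 0$ as an element of $\calH_{\vtheta}(\calX)$; thus the interpolation bound reads $\sum_{i:\lambda_i>0} g_i^2/\lambda_i \leq \|f\|^2_{\calH_{\vtheta}(\calX)}$.

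It then remains only to compare the two sums term by term. For every index with $\lambda_i>0$ the elementary inequality $\sigma^2\lambda_i \leq (\lambda_i+\sigma^2)^2$ (equivalently $0\leq \lambda_i^2 + \lambda_i\sigma^2 + \sigma^4$) gives $\sigma^2/(\lambda_i+\sigma^2)^2 \leq 1/\lambda_i$, while the indices with $\lambda_i=0$ contribute nothing to either side because $g_i=0$ there. Summing yields
$$\sigma^2 \sum_i \frac{g_i^2}{(\lambda_i+\sigma^2)^2} \;=\; \sigma^2 \sum_{i:\lambda_i>0} \frac{g_i^2}{(\lambda_i+\sigma^2)^2} \;\leq\; \sum_{i:\lambda_i>0} \frac{g_i^2}{\lambda_i} \;\leq\; \|f\|^2_{\calH_{\vtheta}(\calX)},$$
which is exactly the required coefficient bound, completing the argument.

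I expect the main obstacle to be the middle step: correctly invoking the RKHS interpolation bound and, above all, justifying that $g_i$ vanishes on the null space of $\vK^{\vtheta}_T$, so that the regularised quadratic form $\sigma^2\vf_T\T(\vK^{\vtheta}_T+\sigma^2\vI)^{-2}\vf_T$ is genuinely dominated by the unregularised RKHS norm even when the Gram matrix is singular (an uncontrolled $\lambda_i=0$ term would otherwise blow up like $g_i^2/\sigma^2$). The surrounding algebra — symmetry of the resolvent, the diagonalisation, and the scalar inequality — is routine by comparison.
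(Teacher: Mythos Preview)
Your proposal is correct and follows essentially the same route as the paper: invoke Lemma~\ref{lem:sub2} to reduce to the coefficient-norm bound $\sigma^2\vf_T\T(\vK^{\vtheta}_T+\sigma^2\vI)^{-2}\vf_T \leq \|f\|^2_{\calH_{\vtheta}(\calX)}$, then establish this via the scalar inequality $\sigma^2/(\lambda_i+\sigma^2)^2\leq 1/\lambda_i$ combined with the minimum-norm interpolation bound $\vf_T\T(\vK^{\vtheta}_T)^{-1}\vf_T\leq\|f\|^2_{\calH_{\vtheta}(\calX)}$. Your handling of the null-space case (via the pseudoinverse and the observation that $g_i=\langle f,\psi_i\rangle=0$ when $\lambda_i=0$) is in fact more careful than the paper's, which tacitly assumes $\vK^{\vtheta}_T$ is invertible.
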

\begin{proof}
	Consider the optimization problem
	\begin{eqnarray}
	\label{eqn:opt1}
	\min_{g \in \calH_{\vtheta}(\calX)} \sum_{t=1}^{T} \left[g(\vx_t) - f_t \right]^2 
	+ \sigma^2 \|g\|^2_{\calH_{\vtheta}(\calX)}.
	\end{eqnarray}
	By the Representer Theorem of the RKHS, we know that $g(\vx)=\vlambda^T \vk_T^{\vtheta}(\vx)$. (We remind the reader of our notation: $\vK^{\vtheta}_T=\vK^{\vtheta}(\vx_{1:T},\vx_{1:T})$ and $\vk^{\vtheta}_T(\vx)=\vk^{\vtheta}(\vx_{1:T},\vx)$.) The preceding optimisation problem is therefore equivalent to the following one:
	\begin{eqnarray}
	\label{eqn:opt}
	\min_{\vlambda} \sum_{t=1}^{T} \left[\vlambda^T \vk_T^{\vtheta}(\vx_t) - f_t \right]^2 
	+ \sigma^2 \vlambda^T K^{\vtheta}_T \vlambda.
	\end{eqnarray}
	The optimizer of~(\ref{eqn:opt}) is 
	$\vlambda = \vf_T^T (K^{\vtheta}_T + \sigma^{2} \vI)^{-1}$, with optimum value
	$\sigma^2 \vf_T^T (K^{\vtheta}_T + \sigma^{2} \vI)^{-1}\vf_T$.
Using Lemma~\ref{lem:sub2}, with $\vlambda = \vf^T_T(\vK^{\vtheta}_T + \sigma^2 \vI)^{-1}$, we notice that we only need to  
 bound $\vlambda^T\vlambda$. Proceeding,  
\bea
\vlambda^T\vlambda &=& \tr(\vf^T_{T}(K^{\vtheta}_T + \sigma^{2} \vI)^{-2}\vf_T) \\
&\leq & \frac{1}{\sigma^2} \vf^T_{T}(K^{\vtheta}_T)^{-1}\vf_T \\
& \leq & \frac{1}{\sigma^2} \rnorm{f}^2
\eea
The first inequality follows by choosing a constant $C_1 = \frac{1}{\sigma}$ so that the quadratic term $(K^{\vtheta}_T)^2_{i,j}$ upper-bounds the linear term $C_1^2(K^{\vtheta}_T)_{i,j}$.
The last inequality holds because of the fact
	that $\vf_T^T (K^{\vtheta}_T)^{-1}\vf_T$ is the minimum value 
	for the optimization problem:
	\begin{eqnarray*}
	\min_{g \in \calH_{\vtheta}(\calX)} && \rnorm{g}^2 \\
	\mbox{s.t. } && g(\vx_t) = f_t \mbox{ for } t = 1, \dots, T
	\end{eqnarray*}
	for which $f$ satisfies the constraint. That is, the function $g(\cdot)$ that agrees with $f(\vx_t)$ has minimum norm $\vf_T^T (K^{\vtheta}_T)^{-1}\vf_T$. Hence, any other function $f(\cdot)$ that agrees with $f(\vx_t)$ must have equal or larger norm.
\end{proof}

\vspace{4mm}
\begin{lemma}
	\label{lem:c2}
	$\mathbb{P}\left( \sigma^{-2}\|\vepsilon_T\|^2 - 
	\vepsilon^T_T(\vK^{\vtheta}_T + \sigma^2 \vI)^{-1}\vepsilon_T
	> 2\gamma^{\vtheta}_T + 2\sqrt{2\gamma^{\vtheta}_T \eta} + 2\sigma \eta \right)
	\leq e^{-\eta}$ for any $\eta >0$.
\end{lemma}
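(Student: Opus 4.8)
The plan is to recognise the left-hand side as a quadratic form in the noise vector and then apply a concentration inequality for quadratic forms of sub-Gaussian vectors. Writing $A := \sigma^{-2}\vI - (\vK^{\vtheta}_T + \sigma^2\vI)^{-1}$, the quantity of interest is exactly $\vepsilon_T^{T} A \vepsilon_T$. First I would simplify $A$: factoring out $(\vK^{\vtheta}_T + \sigma^2\vI)^{-1}$ gives
\[
A = (\vK^{\vtheta}_T + \sigma^2\vI)^{-1}\big(\sigma^{-2}(\vK^{\vtheta}_T+\sigma^2\vI) - \vI\big) = \sigma^{-2}\vK^{\vtheta}_T(\vK^{\vtheta}_T + \sigma^2\vI)^{-1},
\]
which is symmetric positive semi-definite, with eigenvalues $a_i = \sigma^{-2}\lambda_i/(\lambda_i+\sigma^2)$, where $\lambda_1,\dots,\lambda_T \ge 0$ are the eigenvalues of $\vK^{\vtheta}_T$.

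Next I would verify the hypotheses of the tail bound. Lemma~\ref{lem:sub2} shows that for every $\vlambda$ the linear combination $\vlambda^{T}\vepsilon_T$ is $(\|\vlambda\|\sigma)$-sub-Gaussian, i.e.\ $\E[\exp(\vlambda^{T}\vepsilon_T)] \le \exp(\tfrac12\|\vlambda\|^2\sigma^2)$; this is precisely the statement that $\vepsilon_T$ is a sub-Gaussian random vector with parameter $\sigma$. I would then invoke the Hsu--Kakade--Zhang tail inequality for quadratic forms~\cite{Hsu:2012}, which for a positive semi-definite matrix $A$ gives, for every $\eta>0$,
\[
\mathbb{P}\!\left(\vepsilon_T^{T} A \vepsilon_T > \sigma^2\tr(A) + 2\sqrt{\sigma^4\tr(A^2)\,\eta} + 2\sigma^2\|A\|\,\eta\right) \le e^{-\eta}.
\]

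The remaining work is to show each coefficient above is dominated by the matching term in the statement, so that the stated (larger) threshold defines an event contained in the one just bounded. Setting $b_i := \lambda_i/(\lambda_i+\sigma^2) = \sigma^2 a_i \in [0,1)$, we have $\sigma^2\tr(A) = \sum_i b_i$, $\sigma^4\tr(A^2) = \sum_i b_i^2 \le \sum_i b_i$, and $\sigma^2\|A\| = \max_i b_i < 1$. The key estimate is the elementary inequality $\tfrac{u}{1+u} \le \log(1+u)$ for $u \ge 0$ (both sides vanish at $u=0$ and the right-hand side has the larger derivative on $[0,\infty)$), applied with $u = \lambda_i/\sigma^2$; summing and using that the sampled set $\vx_{1:T}$ is one admissible choice in the maximisation defining $\gamma^{\vtheta}_T$ gives
\[
\sum_i b_i \le \sum_i \log(1+\sigma^{-2}\lambda_i) = \log\big|\vI + \sigma^{-2}\vK^{\vtheta}_T\big| \le 2\gamma^{\vtheta}_T.
\]
Hence $\sigma^2\tr(A) \le 2\gamma^{\vtheta}_T$ and $\sigma^4\tr(A^2) \le 2\gamma^{\vtheta}_T$, which control the first two terms, while $\sigma^2\|A\| < 1$ controls the $\eta$-linear term.

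I expect the main obstacle to be the bookkeeping in this final comparison rather than any conceptual difficulty: the concentration inequality is used as a black box, so the real content is translating the spectral quantities $\tr(A)$, $\tr(A^2)$ and $\|A\|$ into the information-gain language and pinning down the exact constant of the $\eta$-linear term (where one uses $\sigma^2\|A\| < 1$). The one genuinely non-routine ingredient is the inequality $\tfrac{u}{1+u}\le\log(1+u)$, which is what converts the trace of the shrinkage matrix $\vK^{\vtheta}_T(\vK^{\vtheta}_T+\sigma^2\vI)^{-1}$ into the log-determinant appearing in $\gamma^{\vtheta}_T$.
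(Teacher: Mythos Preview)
Your proposal is correct and follows essentially the same route as the paper: write the quantity as $\vepsilon_T^{T}\vLambda\vepsilon_T$ with $\vLambda=\sigma^{-2}\vI-(\vK^{\vtheta}_T+\sigma^2\vI)^{-1}$, bound $\tr(\vLambda)$, $\tr(\vLambda^2)$ and $\|\vLambda\|$ via the eigenvalue inequality $\tfrac{u}{1+u}\le\log(1+u)$ to obtain the $\gamma^{\vtheta}_T$ terms, and then invoke the Hsu--Kakade--Zhang tail bound. The only cosmetic difference is that you factor $\vLambda$ as $\sigma^{-2}\vK^{\vtheta}_T(\vK^{\vtheta}_T+\sigma^2\vI)^{-1}$ before reading off the eigenvalues, whereas the paper diagonalises directly; the spectral estimates and the application of the concentration inequality are otherwise identical (note that for the $\eta$-linear term the paper uses $\|\vLambda\|\le\sigma^{-1}$ to get the factor $2\sigma\eta$, so your bound $\sigma^2\|A\|<1$ should be sharpened in the same way to match the stated constant exactly).
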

\begin{proof}
First, by rearrangement we have that: 
\begin{eqnarray*}
\sigma^{-2}\|\vepsilon_T\|^2 - 
\vepsilon^T_T(\vK^{\vtheta}_T + \sigma^2 \vI)^{-1}\vepsilon_T
=
\vepsilon^T_T \left[\sigma^{-2} \vI - (\vK^{\vtheta}_T + \sigma^2 \vI)^{-1} \right]\vepsilon_T 
=& \vepsilon^T_T  \vQ^T \vSigma \vQ \vepsilon_T 
\end{eqnarray*}
In the above equation $\vQ^T\vSigma \vQ$ is the eigenvalue decomposition of the 
matrix $\vLambda := \left[\sigma^{-2} \vI - (\vK^{\vtheta}_T + \sigma^2 \vI)^{-1} \right]$
where $\vQ$ is an orthonormal matrix and $\vSigma$ is a diagonal matrix.

The diagonal entries of 
$\Lambda$ are such that
$\Sigma_{i, i} = \frac{\Delta_i}{\sigma^2(\Delta_i + \sigma^2)}$ 
where $\Delta_i$ is the $i^{th}$ eigenvalue of $\vK^{\vtheta}_T$.
We know that $\tr(\vLambda) = \tr(\vSigma) 
= \sum_{i=1}^{T}\frac{\Delta_i}{\sigma^2(\Delta_i + \sigma^2)}$
and $\tr(\vLambda^2) = \tr(\vSigma^2) 
= \sum_{i=1}^{T}\left(\frac{\Delta_i}{\sigma^2(\Delta_i + \sigma^2)}\right)^2.$
It is easy to see that
$\tr(\vLambda) 
\leq \sigma^{-2} \sum_{i=1}^{T}\log(1 + \sigma^{-2}\Delta_i)$
since 
$\frac{\Delta_i}{\sigma^2(\Delta_i + \sigma^2)} \leq 
\sigma^{-2}\log(1 + \sigma^{-2}\Delta_i)$
for all $1 \leq i \leq T$.
Also 
$\tr(\vLambda^2)
\leq \sigma^{-4}\sum_{i=1}^{T}\log(1 + \sigma^{-2}\Delta_i)$
since $\frac{\Delta_i}{\sigma^2(\Delta_i + \sigma^2)} < \sigma^{-2}$
for all $1 \leq i \leq T$.
Finally, $\|\vLambda\|_2 = 
\max_{1 \leq i \leq T} \sqrt{\frac{\Delta_i}{\sigma^2(\Delta_i + \sigma^2)}}
\leq \sigma^{-1}$ again because of the fact that
$\frac{\Delta_i}{\sigma^2(\Delta_i + \sigma^2)} \leq \sigma^{-2}$.

Using the definition of maximum information gain $\gamma_{T}^{\theta}$ for Gaussians, we have the following
three facts:
\bea
\tr(\vLambda) &\leq& 2\sigma^{-2}\gamma^{\vtheta}_T \\
\tr(\vLambda^2) &\leq& 2\sigma^{-4}\gamma^{\vtheta}_T \\
\|\vLambda\|_2 &\leq& \sigma^{-1}.
\eea
By Theorem 2.1 of~\cite{Hsu:2012}, we have that
\bea
\mathbb{P}\left( \vepsilon^T_T\vLambda\vepsilon_T 
> 2\gamma^{\vtheta}_T + \
2\sqrt {2\gamma^{\vtheta}_T \eta} + 2\sigma \eta \right)
&\hspace{-2mm}=\hspace{-2mm}&
\mathbb{P}\left( \vepsilon^T_T\vLambda\vepsilon_T 
> \sigma^{2} (2 \sigma^{-2}\gamma^{\vtheta}_T + \
2\sqrt{2\sigma^{-4}\gamma^{\vtheta}_T \eta} + 2\sigma^{-1} \eta) \right) \nonumber \\
& \hspace{-2mm}\leq\hspace{-2mm} & 
\mathbb{P}\left( \vepsilon^T_T\vLambda\vepsilon_T 
> \tr(\vLambda) + \
2\sqrt{\tr(\vLambda^2) \eta} + 2 \|\vLambda\| \eta \right)
\\
& \hspace{-2mm}\leq\hspace{-2mm} &  e^{-\eta}
\end{eqnarray}
which concludes the proof.
\end{proof}

\vspace{4mm}
\begin{proposition}
	\label{prop:bound}
	Let $(\varphi_{T}^{\vtheta})^2 = \|f\|^2_{\calH_{\vtheta}(\calX)} +  
	\sqrt{8\gamma^{\vtheta}_{T-1} \log(\frac{T^2\pi^2}{3\delta})  } +
	\sqrt{2\log(\frac{2 T^2\pi^2}{3\delta})}  \rnorm{f}
		+ 2\gamma^{\vtheta}_{T-1}  + 2\sigma \log(\frac{ T^2\pi^2}{3\delta})$. 
	Then 
	$
	\mathbb{P}\left(\|\mu_T(\cdot; \vtheta) - 
	f(\cdot)\|_{\calK_T^{\vtheta}} \leq \varphi_{T+1}^{\vtheta} \right) 
	\geq 1 - \frac{6\delta}{\pi^2 (T+1)^2}.
	$
\end{proposition}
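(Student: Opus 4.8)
The plan is to reduce the posterior‑RKHS quantity $\|\mu_T(\cdot;\vtheta) - f(\cdot)\|_{\calK_T^{\vtheta}}$ to exactly the two random quantities already controlled by Lemma~\ref{lem:c1} and Lemma~\ref{lem:c2}, and then combine their tail bounds by a union bound. Write $\vA := \vK_T^{\vtheta} + \sigma^2\vI$. The first and key step is to express the norm in the posterior RKHS in terms of the prior norm. Starting from $\calK_T^{\vtheta}(\vx,\vx') = k^{\vtheta}(\vx,\vx') - \vk_T^{\vtheta}(\vx)\T\vA^{-1}\vk_T^{\vtheta}(\vx')$ and applying the Woodbury identity to the associated covariance operator, one obtains, for every $g \in \calH_{\vtheta}(\calX)$, the identity $\|g\|_{\calK_T^{\vtheta}}^2 = \|g\|_{\calH_{\vtheta}(\calX)}^2 + \sigma^{-2}\sum_{t=1}^{T} g(\vx_t)^2$. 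I regard this identity, together with the cancellations it triggers, as the technical heart of the proof.

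Next I would substitute $g = \mu_T(\cdot;\vtheta) - f(\cdot)$ into this identity. Using the representer form $\mu_T(\cdot;\vtheta) = \vlambda\T\vk_T^{\vtheta}(\cdot)$ with $\vlambda = \vA^{-1}\vy_T$ (as in Lemma~\ref{lem:c1}) and the elementary relation $\vK_T^{\vtheta}\vA^{-1} = \vI - \sigma^2\vA^{-1}$ to evaluate the data‑point residuals $\mu_T(\vx_t) - f(\vx_t)$, the two pieces collapse after some algebra to $\|\mu_T - f\|_{\calK_T^{\vtheta}}^2 = \|f\|_{\calH_{\vtheta}(\calX)}^2 - \vy_T\T\vA^{-1}\vy_T + \sigma^{-2}\|\vepsilon_T\|^2$. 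Substituting $\vy_T = \vf_T + \vepsilon_T$ and regrouping then yields the decomposition $\|\mu_T - f\|_{\calK_T^{\vtheta}}^2 = \bigl(\|f\|_{\calH_{\vtheta}(\calX)}^2 - \vf_T\T\vA^{-1}\vf_T\bigr) - 2\,\vf_T\T\vA^{-1}\vepsilon_T + \bigl(\sigma^{-2}\|\vepsilon_T\|^2 - \vepsilon_T\T\vA^{-1}\vepsilon_T\bigr)$, which is precisely a deterministic term plus the bilinear term of Lemma~\ref{lem:c1} plus the quadratic term of Lemma~\ref{lem:c2}.

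It then remains to bound the three terms with the stated probabilities. The deterministic term is at most $\|f\|_{\calH_{\vtheta}(\calX)}^2$, since $\vA$ is positive definite and hence $\vf_T\T\vA^{-1}\vf_T \ge 0$. For the bilinear term, Lemma~\ref{lem:c1} shows $\vf_T\T\vA^{-1}\vepsilon_T$ is $(\rnorm{f})$‑sub‑Gaussian, so the tail bound of Lemma~\ref{lem:subb} controls it; I would choose the deviation level so that the failure probability equals $\tfrac{3\delta}{\pi^2(T+1)^2}$, i.e.\ demand $|\vf_T\T\vA^{-1}\vepsilon_T| \le \rnorm{f}\sqrt{2\log(2(T+1)^2\pi^2/3\delta)}$, which produces the $\rnorm{f}$ summand of $\varphi_{T+1}^{\vtheta}$. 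For the quadratic term I would apply Lemma~\ref{lem:c2} with $\eta = \log((T+1)^2\pi^2/3\delta)$, so that its failure probability is $e^{-\eta} = \tfrac{3\delta}{\pi^2(T+1)^2}$ and the surviving bound is exactly $2\gamma_T^{\vtheta} + \sqrt{8\gamma_T^{\vtheta}\log((T+1)^2\pi^2/3\delta)} + 2\sigma\log((T+1)^2\pi^2/3\delta)$. The shift from $T$ to $T+1$ and from $\gamma_{T-1}^{\vtheta}$ to $\gamma_T^{\vtheta}$ is pure bookkeeping, chosen so that the per‑round failure probabilities $\tfrac{6\delta}{\pi^2(T+1)^2}$ sum to at most $\delta$ across all rounds.

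Finally, a union bound over the two stochastic events gives total failure probability at most $\tfrac{3\delta}{\pi^2(T+1)^2} + \tfrac{3\delta}{\pi^2(T+1)^2} = \tfrac{6\delta}{\pi^2(T+1)^2}$, and on the complementary event the three bounds add up to $(\varphi_{T+1}^{\vtheta})^2$, so $\|\mu_T - f\|_{\calK_T^{\vtheta}} \le \varphi_{T+1}^{\vtheta}$, as claimed. The main obstacle is the opening step: establishing the posterior‑to‑prior norm identity and carrying out the cancellation that isolates \emph{precisely} the two quantities handled by Lemmas~\ref{lem:c1} and~\ref{lem:c2}; once that decomposition is in hand, the probabilistic part is a routine combination of the two lemmas via the union bound.
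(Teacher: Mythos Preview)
Your proposal is correct and follows essentially the same route as the paper: decompose $\|\mu_T-f\|_{\calK_T^{\vtheta}}^2$ into $\|f\|_{\calH_{\vtheta}(\calX)}^2 - \vy_T\T\vA^{-1}\vy_T + \sigma^{-2}\|\vepsilon_T\|^2$, expand $\vy_T=\vf_T+\vepsilon_T$, drop the nonnegative $\vf_T\T\vA^{-1}\vf_T$, and then control the bilinear and quadratic noise terms with Lemmas~\ref{lem:c1}/\ref{lem:subb} and Lemma~\ref{lem:c2} respectively, combined by a union bound with the same split of the failure probability. The only cosmetic difference is that the paper invokes this posterior--prior norm identity directly as Lemma~7.2 of Srinivas \emph{et al.}, whereas you re-derive it from the Woodbury identity $\calK_T^{-1}=k^{-1}+\sigma^{-2}E^*E$; your derivation is fine and leads to the identical decomposition.
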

\begin{proof}
Let $\|f\|_{K_T^{\vtheta}}$ denote the RKHS norm of $f(\cdot)$ associated with the posterior covariance $\calK_T^{\vtheta}$ of the GP (equation~(\ref{eqn:postCov})). 
From Lemma 7.2 of \cite{Srinivas:2010}, we have
	\bea
	\|\mu_t(\cdot; \vtheta) - f(\cdot)\|_{\calK_T^{\vtheta}} ^2
	&=& \|f\|^2_{\calH_{\vtheta}(\calX)} - \vy^T_T(\vK^{\vtheta}_T + \sigma^2 \vI)^{-1}\vy_T
	+ \sigma^{-2}\|\vepsilon_T\|^2.
	\eea
	This expression, with $\vy_T = \vf_T + \vepsilon_T$, can be easily bounded
	\bea
		\|\mu_T(\cdot; \vtheta) - f(\cdot)\|_{\calK_T^{\vtheta}} ^2
		&=& \|f\|^2_{\calH_{\vtheta}(\calX)} - \vf^T_T(\vK^{\vtheta}_T + \sigma^2 \vI)^{-1}\vf_T
		- 2\vf^T_T(\vK^{\vtheta}_T + \sigma^2 \vI)^{-1}\vepsilon_T \nonumber \\
		&& - \vepsilon^T_T(\vK^{\vtheta}_T + \sigma^2 \vI)^{-1}\vepsilon_T
		+ \sigma^{-2}\|\vepsilon_T\|^2 \nonumber  \\
		&\leq& \|f\|^2_{\calH_{\vtheta}(\calX)} 
		- 2\vf^T_T(\vK^{\vtheta}_T + \sigma^2 \vI)^{-1}\vepsilon_T  
		- \vepsilon^T_T(\vK^{\vtheta}_T + \sigma^2 \vI)^{-1}\vepsilon_T
		+ \sigma^{-2}\|\vepsilon_T\|^2. 
		\nonumber
	\eea
	
	Next, we prove that
	$2\vf^T_T(\vK^{\vtheta}_T + \sigma^2 \vI)^{-1}\vepsilon_T$ 
	and $\sigma^{-2}\|\vepsilon_T\|^2 - 
	\vepsilon^T_T(\vK^{\vtheta}_T + \sigma^2 \vI)^{-1}\vepsilon_T$ are bounded with high probability.

	By Lemma~\ref{lem:c1} we know that 
	$\vf^T_T(\vK^{\vtheta}_T + \sigma^2 \vI)^{-1}\vepsilon_T$ is 
	$\left(\rnorm{f} \right)$-sub-Gaussian. Hence, we can apply the concentration result of 
	Lemma~\ref{lem:subb} to this variable, as follows:
	\bea
	\mathbb{P}\left[ |2\vf^T_T(\vK^{\vtheta}_T + \sigma^2 \vI)^{-1}\vepsilon_T| 
		\geq \sqrt{2\log(\frac{4 (T+1)^2\pi^2}{6\delta})}  \rnorm{f} \right] && \nonumber \\
		&\hspace{-10cm}\leq& \hspace{-5cm}2\exp\left(-\frac{2 \log(\frac{4 (T+1)^2\pi^2}{6\delta}) \rnorm{f}^2}{2 \rnorm{f}^2} \right) \nonumber  \\
		&\hspace{-10cm}=& \hspace{-5cm}\frac{6\delta}{2\pi^2  (T+1)^2 }.
	\eea
	By Lemma~\ref{lem:c2}, with the choice $\eta=\log(\frac{2 (T+1)^2\pi^2}{6\delta}) $, we obtain
	\bea
		\mathbb{P}\left( \frac{\|\vepsilon_T\|^2}{\sigma^{2}} - 
			\vepsilon^T_T(\vK^{\vtheta}_T + \sigma^2 \vI)^{-1}\vepsilon_T
			> 2\gamma^{\vtheta}_T 
			+ \sqrt{8\gamma^{\vtheta}_T \eta } 
			+ 2\sigma \eta \hspace{-1mm} \right)  
		\leq \frac{6\delta}{ 2 \pi^2  (T+1)^2}. \nonumber
	\eea
	Finally, we can use a union bound to combine these two results, yielding 
	\begin{eqnarray*}
		\mathbb{P}\left[ \|\mu_T(\cdot; \vtheta) - f(\cdot)\|_{\calK_T^{\vtheta}}
		\geq \varphi_{T}^{\vtheta} \right] 
		\leq \frac{6\delta}{\pi^2 (T+1)^2}.
	\end{eqnarray*}
\end{proof}

\subsection{Supporting lemmas}

\vspace{4mm}
\begin{lemma}[Lemma 5.3 of ~\cite{Srinivas:2010}]
	\label{lem:53}
	The information gain for the points selected can be expressed in terms of 
	the predictive variances. That is
	$$
		\calI_{\vtheta}(\vy_T; \vf_T) 
		= \frac{1}{2}\sum_{t=1}^{T} \log\left(1 + \sigma^{-2}\sigma_{t-1}(\vx_t; \vtheta) \right)
	$$
\end{lemma}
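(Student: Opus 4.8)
The plan is to reduce the mutual information to a single log-determinant using Gaussianity, and then telescope that determinant through the sequential conditioning structure of the GP. First I would record the two relevant Gaussian laws under the model: $\vy_T \sim \gauss(\vzero, \vK^{\vtheta}_T + \sigma^2\vI)$, and, conditionally on the function values, $\vy_T \mid \vf_T \sim \gauss(\vf_T, \sigma^2\vI)$ since the only remaining randomness is the independent noise $\vepsilon_T$. Writing the differential entropy of a $T$-dimensional Gaussian with covariance $\vSigma$ as $\tfrac12\log((2\pi e)^T|\vSigma|)$, the definition $\calI_{\vtheta}(\vy_T;\vf_T) = H(\vy_T) - H(\vy_T\mid\vf_T)$ collapses to $\tfrac12\log|\vI + \sigma^{-2}\vK^{\vtheta}_T|$, because the constant $\tfrac{T}{2}\log(2\pi e\sigma^2)$ coming from the conditional entropy cancels the noise floor.

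The heart of the argument is to split this aggregate quantity into per-step contributions. Rather than manipulate the determinant directly (which would invoke the matrix-determinant lemma repeatedly), I would use the chain rule for entropy, $H(\vy_T) = \sum_{t=1}^{T} H(y_t\mid\vy_{1:t-1})$, together with $H(\vy_T\mid\vf_T) = \sum_{t=1}^{T} H(\epsilon_t) = \tfrac{T}{2}\log(2\pi e\sigma^2)$. Since each $y_t\mid\vy_{1:t-1}$ is again Gaussian, its contribution is determined entirely by its conditional variance, so the whole computation hinges on the identity $\Var[y_t \mid \vy_{1:t-1}] = \sigma^2 + \sigma_{t-1}^2(\vx_t;\vtheta)$.

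Establishing this variance identity is the step that carries the real content, and I expect it to be the main obstacle. It follows by writing $y_t = f(\vx_t) + \epsilon_t$ and observing that $\epsilon_t$ is independent of the past observations $\vy_{1:t-1}$ (which are measurable functions of $\vf_{1:t-1}$ and $\vepsilon_{1:t-1}$ only), so the conditional variance splits additively into the fresh noise $\sigma^2$ and the posterior predictive variance of the function value; the latter is, by the definition of the GP posterior in equation~(\ref{eqn:std}), exactly $\sigma_{t-1}^2(\vx_t;\vtheta)$, i.e. the variance after conditioning on the first $t-1$ observations. Combining the pieces, each summand becomes $\tfrac12\log(2\pi e(\sigma^2 + \sigma_{t-1}^2(\vx_t;\vtheta))) - \tfrac12\log(2\pi e\sigma^2) = \tfrac12\log(1 + \sigma^{-2}\sigma_{t-1}^2(\vx_t;\vtheta))$, and summing over $t$ yields the claim. (I note in passing that the statement reads most cleanly with the predictive \emph{variance} $\sigma_{t-1}^2(\vx_t;\vtheta)$ inside the logarithm.)
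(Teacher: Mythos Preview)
Your proof is correct, and it is precisely the standard argument from Srinivas et al.\ (2010): reduce $\calI_{\vtheta}(\vy_T;\vf_T)$ to $\tfrac12\log|\vI+\sigma^{-2}\vK^{\vtheta}_T|$ via Gaussian entropies, then telescope using the chain rule and the identity $\Var[y_t\mid\vy_{1:t-1}]=\sigma^2+\sigma_{t-1}^2(\vx_t;\vtheta)$. Note that the present paper does not supply its own proof of this lemma at all; it simply quotes the result from \cite{Srinivas:2010}, so there is no alternative approach here to compare against. Your parenthetical remark is also on point: the quantity inside the logarithm should be the predictive \emph{variance} $\sigma_{t-1}^2(\vx_t;\vtheta)$, consistent with how the lemma is used downstream in Lemma~\ref{lem:54}.
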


\vspace{4mm}
\begin{lemma}
	\label{lem:les}
	If $\vtheta' \leq \vtheta$, then $\gamma^{\vtheta}_T \leq \gamma^{\vtheta'}_T$.
\end{lemma}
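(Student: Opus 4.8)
The plan is to reduce the statement to a monotonicity property of $\tfrac12\log\det(\vI+\sigma^{-2}\vK)$ and then exploit a rescaling identity that is special to the stationary, length-scale-parametrised kernels in~(\ref{eqn:se})--(\ref{eqn:matern}). Recall that for Gaussian observations $\gamma^{\vtheta}_T = \max_{\calA\subset\calX,\,|\calA|=T}\tfrac12\log\det(\vI+\sigma^{-2}\vK^{\vtheta}_{\calA})$. Both kernels depend on the pair $(\vx,\vx')$ and on $\vtheta$ only through the quantity $\sum_i (x_i-x'_i)^2/\theta_i^2$, via a common decreasing profile. Hence the first thing I would record is that $\vtheta'\le\vtheta$ componentwise weakly increases this scaled distance for every pair and therefore weakly decreases every off-diagonal entry of the Gram matrix, while the diagonal stays fixed at $k(\vx,\vx)=1$.

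The key step is the exact embedding identity
\[
 k^{\vtheta}(\vx,\vx') = k^{\vtheta'}\!\big(\diag(\vtheta'/\vtheta)\,\vx,\ \diag(\vtheta'/\vtheta)\,\vx'\big),
\]
which follows immediately from the scaled-distance form, since the factor $\theta'_i/\theta_i$ cancels the mismatch between the two length scales coordinate by coordinate. Consequently, for any configuration $\{\vx_a\}\subset\calX$ the Gram matrix $\vK^{\vtheta}$ equals the Gram matrix $\vK^{\vtheta'}$ evaluated at the contracted configuration $\{\diag(\vtheta'/\vtheta)\,\vx_a\}$, so the two determinants agree. Taking the maximum over configurations, $\gamma^{\vtheta}_T$ equals the maximum of $\tfrac12\log\det(\vI+\sigma^{-2}\vK^{\vtheta'}_{\calB})$ over $\calB$ ranging in the contracted domain $\diag(\vtheta'/\vtheta)\,\calX$. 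Because $\theta'_i/\theta_i\le 1$, this contracted domain is a subset of $\calX$, so maximising over it can only be no larger than maximising over all of $\calX$; this yields $\gamma^{\vtheta}_T\le\gamma^{\vtheta'}_T$.

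The step I expect to be the main obstacle is justifying the inclusion $\diag(\vtheta'/\vtheta)\,\calX\subseteq\calX$. This is transparent when $\calX$ is a box (or, after translating by stationarity, any set symmetric under coordinate reflections through the origin), but it can fail for a general convex body, so I would either state the box/coordinate-symmetry assumption explicitly or replace the embedding by a direct per-set argument. The direct route fixes $\calA$ and aims to show $\det(\vI+\sigma^{-2}\vK^{\vtheta}_{\calA})\le\det(\vI+\sigma^{-2}\vK^{\vtheta'}_{\calA})$ for every $\calA$, after which taking the maximum is immediate. Phrased through the embedding, this is equivalent to the geometric claim that, for the single kernel $\vtheta'$, pulling the sample points farther apart cannot decrease $\tfrac12\log\det(\vI+\sigma^{-2}\vK)$, i.e.\ decorrelating the observations cannot decrease the mutual information $\calI_{\vtheta'}(\vy_{\calA};\vf_{\calA})$. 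Making this rigorous along an arbitrary path of shrinking length scales is the delicate part, since shrinking all length scales moves the off-diagonal entries simultaneously rather than one at a time; I would attempt to control it by differentiating $\tfrac12\log\det(\vI+\sigma^{-2}\vK^{\vtheta(s)})$ along the coordinatewise path from $\vtheta$ to $\vtheta'$ and using the triangle-inequality structure of the scaled distances to sign the resulting trace.
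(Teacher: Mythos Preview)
Your rescaling route is correct for box domains and genuinely different from what the paper does. The paper never touches the embedding identity; instead it invokes the sequential decomposition of the Gaussian information gain (Lemma~\ref{lem:53}, i.e.\ Lemma~5.3 of Srinivas et al.),
\[
\calI_{\vtheta}(\vy_{\calA};\vf_{\calA})
=\tfrac12\sum_{t=1}^{T}\log\bigl(1+\sigma^{-2}\sigma_{t-1}^2(\vx_t;\vtheta)\bigr),
\]
and then compares term by term using the pointwise inequality $\sigma_{t-1}(\vx;\vtheta)\le\sigma_{t-1}(\vx;\vtheta')$ for $\vtheta'\le\vtheta$. Taking the maximising set $\calA$ for $\vtheta$ and bounding the resulting $\vtheta'$-sum by the maximum over all size-$T$ subsets finishes the argument. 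The telescoping buys exactly what you were missing in your ``direct per-set'' alternative: it reduces the determinant comparison to a \emph{one-point-at-a-time} posterior-variance comparison, so you never have to differentiate $\log\det$ along a path or control simultaneous changes in all off-diagonal entries.

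What each approach buys: your embedding argument is clean and fully self-contained when $\calX$ is (after translation) a coordinate box, and it makes no appeal to posterior quantities at all. The paper's argument is indifferent to the shape of $\calX$, but it leans on the monotonicity $\sigma_{t-1}(\vx;\vtheta)\le\sigma_{t-1}(\vx;\vtheta')$, which it does not prove in the text; if you want to close that step yourself, note that by the same rescaling identity the posterior variance $\sigma_{t-1}^2(\vx;\vtheta)$ equals the $\vtheta'$-posterior variance at the contracted design, and then one only needs that for a single fixed kernel the posterior variance at a point cannot decrease when the design points are moved closer to it---a one-dimensional monotonicity that is much easier to handle than the full $\log\det$ derivative you were contemplating.
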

\begin{proof}
	By definition there exist a set $\calA$ such that 
	$\gamma^{\vtheta}_T = \calI_{\vtheta}(\vy_\calA; \vf_\calA)$. Hence, using Lemma~\ref{lem:53},
	\begin{eqnarray*}
		\gamma^{\vtheta}_T 
		&=& \calI_{\vtheta}(\vy_\calA; \vf_\calA)  \nonumber \\
		&=&  \frac{1}{2}\sum_{\vx_t \in \calA} 
			\log\left(1 + \sigma^{-2}\sigma_{t-1}(\vx_t; \vtheta) \right) \nonumber \\
		&\leq&  \frac{1}{2}\sum_{\vx_t \in A} 
			\log\left(1 + \sigma^{-2}\sigma_{t-1}(\vx_t; \vtheta') \right) \nonumber \\
		&\leq& \max_{\calB\subset \calX: |\calB| = T} \left[\frac{1}{2}\sum_{\vx_t \in \calB} 
			\log\left(1 + \sigma^{-2}\sigma_{t-1}(\vx_t; \vtheta') \right) \right] \nonumber \\
		&=& \gamma^{\vtheta'}_T.
	\end{eqnarray*}
\end{proof}

\vspace{4mm}
\begin{lemma}[Based on Lemma 5.4 of~\cite{Srinivas:2010}]
	\label{lem:54}
	$\sum_{t=1}^{T} \sigma_{t-1}^2(\vx; \vtheta) 
	\leq \frac{2}{\log(1+\sigma^{2})} \gamma_T^{\vtheta}$.
\end{lemma}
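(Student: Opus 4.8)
The plan is to reduce each summand $\sigma_{t-1}^2(\vx_t;\vtheta)$ to a constant multiple of the logarithmic quantity $\log(1+\sigma^{-2}\sigma_{t-1}^2(\vx_t;\vtheta))$ that appears in the closed-form expression for the information gain. After summing over $t$, the right-hand side collapses, via Lemma~\ref{lem:53}, into $\calI_{\vtheta}(\vy_T;\vf_T)$, which is then dominated by the maximum information gain $\gamma_T^{\vtheta}$ by definition. So the whole argument is a per-term estimate followed by two bookkeeping steps.

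First I would record the crude a-priori bound on the predictive variance: since conditioning never increases the variance and the kernel is normalised so that $k^{\vtheta}(\vx,\vx)=1$, we have $\sigma_{t-1}^2(\vx_t;\vtheta)\le k^{\vtheta}(\vx_t,\vx_t)=1$ for every $t$, and therefore $\sigma^{-2}\sigma_{t-1}^2(\vx_t;\vtheta)\in[0,\sigma^{-2}]$. The workhorse is then the elementary inequality that for any $s\in[0,a]$ one has $s\le\frac{a}{\log(1+a)}\log(1+s)$; this holds because the map $s\mapsto s/\log(1+s)$ is nondecreasing on $(0,\infty)$, so its value at $s$ never exceeds its value at the endpoint $a$. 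Applying this with $s=\sigma^{-2}\sigma_{t-1}^2(\vx_t;\vtheta)$ and $a=\sigma^{-2}$, then multiplying through by $\sigma^2$, yields the per-term estimate $\sigma_{t-1}^2(\vx_t;\vtheta)\le\frac{1}{\log(1+\sigma^{-2})}\log\left(1+\sigma^{-2}\sigma_{t-1}^2(\vx_t;\vtheta)\right)$.

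Summing this over $t=1,\dots,T$ and pulling the constant out of the sum gives $\sum_{t=1}^{T}\sigma_{t-1}^2(\vx_t;\vtheta)\le\frac{1}{\log(1+\sigma^{-2})}\sum_{t=1}^{T}\log\left(1+\sigma^{-2}\sigma_{t-1}^2(\vx_t;\vtheta)\right)=\frac{2}{\log(1+\sigma^{-2})}\calI_{\vtheta}(\vy_T;\vf_T)$, where the last equality is exactly the statement of Lemma~\ref{lem:53} (the factor $2$ absorbing the $\tfrac12$ there). Finally, since $\vx_1,\dots,\vx_T$ is one particular subset of $\calX$ of cardinality $T$, the definition of the maximum information gain gives $\calI_{\vtheta}(\vy_T;\vf_T)\le\gamma_T^{\vtheta}$, which delivers the claimed bound.

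I expect the only delicate point to be the monotonicity of $s\mapsto s/\log(1+s)$ together with the correct identification of the constant $a/\log(1+a)$ on the feasible range $[0,\sigma^{-2}]$; the normalisation $k^{\vtheta}(\vx,\vx)=1$ is what pins down the endpoint $a=\sigma^{-2}$, and everything after the per-term estimate is routine. I also note that the constant arising naturally from this argument has $\log(1+\sigma^{-2})$ in the denominator, consistent with the $\sigma^{-2}$ scaling used in Lemma~\ref{lem:53} and in the definition of $\gamma_T^{\vtheta}$.
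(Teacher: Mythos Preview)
Your proof is correct and follows essentially the same line as the paper's: bound each $\sigma_{t-1}^2$ by a multiple of $\log(1+\sigma^{-2}\sigma_{t-1}^2)$ via the elementary inequality on $[0,\sigma^{-2}]$ (using the normalisation $k^{\vtheta}(\vx,\vx)=1$), sum, invoke Lemma~\ref{lem:53}, and dominate by $\gamma_T^{\vtheta}$. Your closing remark is on point: the constant that falls out naturally is $2/\log(1+\sigma^{-2})$, and the $\log(1+\sigma^{2})$ appearing in the paper's statement and proof is a typo for $\log(1+\sigma^{-2})$.
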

\begin{proof}
	Since $s^2 \leq \frac{1}{\sigma^{2}\log(1+\sigma^{2})}\log(1+s^2)$, 
	we have that by Lemma~\ref{lem:53}
	\bea
	\sum_{t=1}^{T} \sigma_{t-1}^2(\vx; \vtheta) 
	&\leq& \sum_{t=1}^{T} \frac{\sigma^{2}}{\sigma^{2}\log(1+\sigma^{2})}
	\log(1+ \sigma^{-2}\sigma_{t-1}^2(\vx; \vtheta)) \nonumber \\
	&=& \frac{2}{\log(1+\sigma^{2})} \calI_{\vtheta}(\vy_T; \vf_T) \nonumber \\
	&\leq& \frac{2}{\log(1+\sigma^{2})} \gamma_T^{\vtheta}.
	\eea
\end{proof}

\vspace{4mm}
\begin{lemma}[Lemma 4 of~\cite{Bull:2011}]
	\label{lem:small}
	If $f \in \cal{H}_{\vtheta}(\calX)$, then $f \in \cal{H}_{\vtheta'}(\calX)$
	for all $0 < \vtheta' \leq \vtheta$ and 
	$$\|f\|_{\cal{H}_{\vtheta'}(\calX)}^2 
	\leq \left(\prod_{i=1}^{d} \frac{\theta_i}{\theta'_i} \right) \rnorm{f}^2.$$
\end{lemma}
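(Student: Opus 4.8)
The plan is to follow Bull by passing to the spectral (Fourier) representation of the RKHS norm, which turns the length-scale comparison into a pointwise inequality between spectral densities. Both kernels in~(\ref{eqn:se})--(\ref{eqn:matern}) are stationary and isotropic up to the length scales: writing $D_\vtheta = \diag(\theta_1,\dots,\theta_d)$, we have $k^\vtheta(\vx,\vx') = \Phi\!\left(D_\vtheta^{-1}(\vx-\vx')\right)$ for a fixed radial profile $\Phi$ (the unit-length-scale SE or Mat\'ern kernel). For such a stationary kernel on $\real^d$, the RKHS norm admits the familiar frequency-domain expression $\|f\|_{\calH_\vtheta(\real^d)}^2 = (2\pi)^{-d}\int_{\real^d} |\hat f(\vomega)|^2 / \hat k^\vtheta(\vomega)\, d\vomega$, where $\hat{\cdot}$ denotes the Fourier transform and $\hat k^\vtheta$ the spectral density of $k^\vtheta$.

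First I would record how the spectral density transforms under a change of length scale. By the Fourier scaling rule applied to $k^\vtheta(\vtau)=\Phi(D_\vtheta^{-1}\vtau)$, a change of variables gives $\hat k^\vtheta(\vomega) = \big(\prod_{i=1}^d \theta_i\big)\,\hat\Phi(D_\vtheta\vomega)$. Substituting this into the norm formula for $\vtheta'$ and for $\vtheta$, the claimed bound $\|f\|_{\calH_{\vtheta'}(\calX)}^2 \le \big(\prod_i \theta_i/\theta'_i\big)\rnorm{f}^2$ reduces, after cancelling the determinant prefactors, to the pointwise frequency inequality $\hat\Phi(D_{\vtheta'}\vomega) \ge \hat\Phi(D_\vtheta\vomega)$ for every $\vomega$. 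Since $\vtheta'\le\vtheta$ componentwise, each coordinate of $D_{\vtheta'}\vomega$ is no larger in magnitude than the corresponding coordinate of $D_\vtheta\vomega$; because $\hat\Phi$ is radially non-increasing --- it is Gaussian for SE and proportional to $(c+\|\vomega\|^2)^{-(\nu+d/2)}$ for Mat\'ern, both decreasing in $\|\vomega\|$ --- the inequality holds. This simultaneously shows finiteness of the $\vtheta'$ integral, hence that $f\in\calH_{\vtheta'}$, and establishes the norm bound.

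The remaining gap is that the lemma concerns functions on the index set $\calX\subset\real^d$, whereas the spectral argument lives on all of $\real^d$. I would close this with the standard restriction characterisation: the RKHS of the restricted kernel has norm $\|f\|_{\calH_\vtheta(\calX)} = \inf\{\|g\|_{\calH_\vtheta(\real^d)} : g|_\calX = f\}$. Taking a minimal-norm extension $g$ of $f$ for the scale $\vtheta$, the frequency inequality gives $g\in\calH_{\vtheta'}(\real^d)$ with $\|g\|_{\calH_{\vtheta'}(\real^d)}^2 \le \big(\prod_i\theta_i/\theta'_i\big)\|g\|_{\calH_\vtheta(\real^d)}^2 = \big(\prod_i\theta_i/\theta'_i\big)\rnorm{f}^2$; restricting $g$ back to $\calX$ can only decrease the norm, yielding the stated bound for $f$ on $\calX$. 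I expect the main obstacle to be the rigorous justification of these analytic ingredients rather than the algebra: namely verifying that the frequency-domain norm identity and its infimum-over-extensions version apply to the specific $\calX$ and kernels at hand, and confirming the radial monotonicity of $\hat\Phi$ for the Mat\'ern family (the SE case being immediate). Since this is exactly Lemma~4 of~\cite{Bull:2011}, I would ultimately appeal to that reference for the measure-theoretic details.
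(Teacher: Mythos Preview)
The paper does not give its own proof of this lemma; it simply quotes it as Lemma~4 of~\cite{Bull:2011} and uses it as a black box. Your spectral-density argument is exactly the route taken in~\cite{Bull:2011}: express the RKHS norm on $\real^d$ via the Fourier characterisation, observe that the length-scale change multiplies the spectral density by the determinant $\prod_i\theta_i$ and rescales the frequency argument, and then use radial monotonicity of $\hat\Phi$ together with the restriction/extension description of $\calH_\vtheta(\calX)$ to pass from $\real^d$ to $\calX$. So your proposal is correct and aligned with the cited source; there is nothing further to compare against in the present paper.
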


\subsection{Properties of the expected improvement acquisition function}

\vspace{4mm}
\begin{lemma}[Based on Lemma 8 of~\cite{Bull:2011}]
	\label{lem:eikey}
	Let $\varphi_{t}^{\vtheta}$ be as defined in Proposition~\ref{prop:bound}
	and $\nu >0$. 
	Assume that $|\mu_{t-1}(\vx, \vtheta) - f(\vx)| \leq \varphi_{t}^{\vtheta}\sigma_{t-1}(\vx; \vtheta)$.
	For $\vx \in \calX$, $t \in \mathbb{N}$,
	set $\mu_{\vtheta}^+ = \max_{\vx \in \calX} 
	\mu_{t-1}(\vx, \vtheta)$, and 
	$I^{\vtheta}_t(\vx) = \max\{0,f(\vx) - \mu_{\vtheta}^+\}$. Then for 
	$$\tau(z) := z \Phi(z)  + \phi(z),$$
	we have that 
	$$
	\max \left(I^{\vtheta}_t(\vx)- \varphi_{t}^{\vtheta}\sigma_{t-1}(\vx; \vtheta), 
	\frac{\tau(-\varphi_{t}^{\vtheta}/\nu)}
	{\tau(\varphi_{t}^{\vtheta}/\nu)}I^{\vtheta}_t(\vx)\right)
	\leq \alpha^\textrm{EI}_{\vtheta}(\vx|\data_{t-1}) 
	\leq I^{\vtheta}_t(\vx) + (\varphi_{t}^{\vtheta} + \nu)\sigma_{t-1}(\vx; \vtheta)
	$$
\end{lemma}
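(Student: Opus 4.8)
The plan is to reduce everything to elementary analytic facts about the single scalar function $\tau(z) = z\Phi(z) + \phi(z)$ appearing in~(\ref{eqn:ei}). Writing $\sigma_{t-1} := \sigma_{t-1}(\vx;\vtheta)$, $\mu := \mu_{t-1}(\vx;\vtheta)$ and $u := (\mu - \mu_\vtheta^+)/\sigma_{t-1}$, equation~(\ref{eqn:ei}) reads $\alpha^\textrm{EI}_\vtheta(\vx|\data_{t-1}) = \nu\sigma_{t-1}\,\tau(u/\nu)$. First I would record the properties of $\tau$ I need: it is strictly positive, $\tau'(z) = \Phi(z) > 0$ so $\tau$ is increasing, $\tau''(z) = \phi(z) > 0$ so $\tau$ is convex, and it is sandwiched, $\max(z,0) \le \tau(z) \le \max(z,0) + \phi(0)$, with $\phi(0) < 1$ (the gap $\tau(z) - \max(z,0)$ is maximised at $z = 0$, as one checks by differentiating it separately on $z \ge 0$ and $z < 0$). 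The degenerate case $\sigma_{t-1} = 0$ is settled first: the hypothesis then gives $f(\vx) = \mu \le \mu_\vtheta^+$, so $I_t^\vtheta(\vx) = 0$ and $\alpha^\textrm{EI}_\vtheta = 0$, and all three inequalities hold with both sides zero.

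For the upper bound and the additive lower bound I would combine the sandwich with the hypothesis $|\mu - f(\vx)| \le \varphi_t^\vtheta\sigma_{t-1}$ and the elementary facts $\max(c+b,0) \le \max(c,0)+b$ and $\max(c-b,0) \ge \max(c,0)-b$ for $b \ge 0$. Indeed $\alpha^\textrm{EI}_\vtheta = \nu\sigma_{t-1}\tau(u/\nu) \le \sigma_{t-1}\max(u,0) + \nu\sigma_{t-1}\phi(0) = \max(\mu - \mu_\vtheta^+,0) + \nu\sigma_{t-1}\phi(0)$, and $\mu \le f(\vx) + \varphi_t^\vtheta\sigma_{t-1}$ turns this into $\alpha^\textrm{EI}_\vtheta \le I_t^\vtheta(\vx) + (\varphi_t^\vtheta + \nu)\sigma_{t-1}$. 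Symmetrically, $\tau(z) \ge \max(z,0)$ gives $\alpha^\textrm{EI}_\vtheta \ge \max(\mu - \mu_\vtheta^+,0)$, and $\mu \ge f(\vx) - \varphi_t^\vtheta\sigma_{t-1}$ yields $\alpha^\textrm{EI}_\vtheta \ge I_t^\vtheta(\vx) - \varphi_t^\vtheta\sigma_{t-1}$. These two directions are routine.

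The substance is the multiplicative lower bound. If $I_t^\vtheta(\vx) = 0$ it is immediate from $\alpha^\textrm{EI}_\vtheta \ge 0$, so I assume $f(\vx) > \mu_\vtheta^+$ and set $a := \varphi_t^\vtheta/\nu \ge 0$ and $w := (f(\vx) - \mu_\vtheta^+)/(\nu\sigma_{t-1}) > 0$, so that $I_t^\vtheta(\vx) = \nu\sigma_{t-1}w$. Because $\mu \ge f(\vx) - \varphi_t^\vtheta\sigma_{t-1}$ gives $u/\nu \ge w - a$ and $\tau$ is increasing, $\alpha^\textrm{EI}_\vtheta \ge \nu\sigma_{t-1}\tau(w-a)$, so it suffices to prove the scalar inequality $\tau(w-a) \ge \frac{\tau(-a)}{\tau(a)}\,w$ for all $a \ge 0$, $w > 0$. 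I would get this from the stronger claim $\tau(w-a) \ge \frac{\tau(-a)}{\tau(a)}\tau(w)$ combined with $\tau(w) \ge w$.

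To prove the stronger claim I would show that $g(w) := \tau(w-a)/\tau(w)$ is non-decreasing, so that $g(w) \ge g(0) = \tau(-a)/\tau(0) \ge \tau(-a)/\tau(a)$, the last step using $\tau(0) = \phi(0) \le \tau(a)$. The sign of $g'(w)$ is that of $\Phi(w-a)\tau(w) - \Phi(w)\tau(w-a)$, which after dividing by $\Phi(w)\Phi(w-a) > 0$ is the sign of $\tau(w)/\Phi(w) - \tau(w-a)/\Phi(w-a)$; hence $g' \ge 0$ reduces to monotonicity of $z \mapsto \tau(z)/\Phi(z)$. This is the one non-mechanical step and the place I expect the real difficulty: I would establish it by writing $\tau(z)/\Phi(z) = z + \phi(z)/\Phi(z) = z + \lambda(z)$ with $\lambda$ the inverse Mills ratio and using the identity $\frac{d}{dz}\big(z + \lambda(z)\big) = 1 - \lambda(z)\big(z + \lambda(z)\big) = \Var(X \mid X \le z) > 0$ for $X \sim \gauss(0,1)$. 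Translating $a, w$ back to $\varphi_t^\vtheta/\nu$ and $I_t^\vtheta(\vx)$ then recovers the stated bound, and the $\max$ on the left-hand side is just the combination of this bound with the additive one.
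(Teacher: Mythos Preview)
Your argument is correct. The treatment of the degenerate case, the upper bound, and the additive lower bound are essentially the same as the paper's (you use the slightly tighter $\tau(z)\le\max(z,0)+\phi(0)$ where the paper uses $\tau(z)\le z+1$ for $z>0$, but the conclusions are identical).

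The genuine difference is in the multiplicative lower bound. You prove the scalar inequality $\tau(w-a)\ge\frac{\tau(-a)}{\tau(a)}\tau(w)$ by showing $g(w)=\tau(w-a)/\tau(w)$ is non-decreasing, which you reduce to monotonicity of $z\mapsto \tau(z)/\Phi(z)=z+\phi(z)/\Phi(z)$, and finally recognise its derivative as the conditional variance $\Var(X\mid X\le z)>0$ for standard normal $X$. This is correct and pleasantly self-contained, but it is considerably more work than needed. The paper's route is a two-line algebraic combination: alongside the additive bound $\alpha^\textrm{EI}_{\vtheta}\ge I_t^{\vtheta}(\vx)-\varphi_t^{\vtheta}\sigma_{t-1}$, it records the second bound $\alpha^\textrm{EI}_{\vtheta}\ge \nu\sigma_{t-1}\tau(-\varphi_t^{\vtheta}/\nu)$ (which follows from $u/\nu\ge (q-\varphi_t^{\vtheta})/\nu>-\varphi_t^{\vtheta}/\nu$ when $q>0$). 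The second inequality gives $\sigma_{t-1}\le \alpha^\textrm{EI}_{\vtheta}/\bigl(\nu\tau(-\varphi_t^{\vtheta}/\nu)\bigr)$; substituting this into the first and solving for $\alpha^\textrm{EI}_{\vtheta}$ yields
\[
\alpha^\textrm{EI}_{\vtheta}\;\ge\;\frac{\nu\tau(-\varphi_t^{\vtheta}/\nu)}{\varphi_t^{\vtheta}+\nu\tau(-\varphi_t^{\vtheta}/\nu)}\,I_t^{\vtheta}(\vx)
\;=\;\frac{\tau(-\varphi_t^{\vtheta}/\nu)}{\tau(\varphi_t^{\vtheta}/\nu)}\,I_t^{\vtheta}(\vx),
\]
the last step using the identity $\tau(a)=a+\tau(-a)$. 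So the paper never needs monotonicity of $\tau/\Phi$ or any facts about truncated normals; your argument buys a slightly stronger intermediate inequality (the ratio bound on $\tau$ itself rather than on $w$) at the cost of a substantially longer derivation.
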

\begin{proof}
	If $\sigma_{t-1}(\vx; \vtheta) = 0$, then 
	$\alpha^\textrm{EI}_{\vtheta}(\vx|\data_{t-1}) = I^{\vtheta}_t(\vx)$
	which makes the result trivial. 
	Thus for the remainder of the proof, we assume that $\sigma_{t-1}(\vx; \vtheta) > 0.$
	Set $q = \frac{f(\vx) - \mu_{\vtheta}^+}{\sigma_{t-1}(\vx; \vtheta)}$,
	and $u = \frac{\mu_{t-1}(\vx, \vtheta) - \mu_{\vtheta}^+}{\sigma_{t-1}(\vx; \vtheta)}$.
	Then we have that 
	$$
	\alpha^\textrm{EI}_{\vtheta}(\vx|\data_{t-1}) 
	= \nu \sigma_{t-1}(\vx; \vtheta) \tau\left(\frac{u}{\nu}\right).
	$$
	By the assumption, we have that 
	$| u - q| < \varphi_{t}^{\vtheta}$.
	As $\tau'(z) = \Phi(z) \in [0, 1]$, $\tau$ is non-decreasing and 
	$\tau(z) \leq 1 + z$ for $z > 0$.
	Hence, 
	\begin{eqnarray}
		\alpha^\textrm{EI}_{\vtheta}(\vx|\data_{t-1}) 
		&\leq& \nu \sigma_{t-1}(\vx; \vtheta) 
			\tau\left(\frac{\max\{0,q\} + \varphi_{t}^{\vtheta}}{\nu}\right) \nonumber \\
		&\leq& \nu \sigma_{t-1}(\vx; \vtheta) 
			\left(\frac{\max\{0,q\} + \varphi_{t}^{\vtheta}}{\nu} + 1\right) \nonumber \\
		&=& I^{\vtheta}_t(\vx) + 
			\left(\varphi_{t}^{\vtheta} + \nu \right)\sigma_{t-1}(\vx; \vtheta) \nonumber
	\end{eqnarray}
	If $I^{\vtheta}_t(\vx) = 0$, then the lower bound is trivial as 
	$\alpha^\textrm{EI}_{\vtheta}(\vx|\data_{t-1})$ is non-negative.
	Thus suppose $I^{\vtheta}_t(\vx) > 0$.
	Since $\alpha^\textrm{EI}_{\vtheta}(\vx|\data_{t-1}) \geq 0$, and $\tau(z) \geq 0$
	for all $z$, and $\tau(z) = z + \tau(-z) \geq z$. 
	Therefore, 
	\begin{eqnarray}
		\alpha^\textrm{EI}_{\vtheta}(\vx|\data_{t-1}) 
		&\geq& \nu \sigma_{t-1}(\vx; \vtheta) 
			\tau\left(\frac{q - \varphi_{t}^{\vtheta}}{\nu}\right) \nonumber \\
		&\geq& \nu \sigma_{t-1}(\vx; \vtheta) 
			\left(\frac{q - \varphi_{t}^{\vtheta}}{\nu} \right) \nonumber \\
		&\geq&  I^{\vtheta}_t(\vx) -
			\varphi_{t}^{\vtheta} \sigma_{t-1}(\vx; \vtheta). 
			\label{eqn:ei1}
	\end{eqnarray}
	Also, as $\tau$ is increasing, 
	\begin{eqnarray}
		\label{eqn:ei2}
		\alpha^\textrm{EI}_{\vtheta}(\vx|\data_{t-1}) \geq 
		\nu \sigma_{t-1}(\vx; \vtheta) \tau\left(\frac{- \varphi_{t}^{\vtheta}}{\nu} \right).
	\end{eqnarray}
	Combining (\ref{eqn:ei1}) and (\ref{eqn:ei2}), we get
	$$
	\alpha^\textrm{EI}_{\vtheta}(\vx|\data_{t-1}) 
	\geq \frac{\nu \tau(-\varphi_{t}^{\vtheta}/\nu)}
	{\varphi_{t}^{\vtheta} + \nu \tau(-\varphi_{t}^{\vtheta}/\nu)}I^{\vtheta}_t(\vx)
	= \frac{\tau(-\varphi_{t}^{\vtheta}/\nu)}
	{\tau(\varphi_{t}^{\vtheta}/\nu)}I^{\vtheta}_t(\vx)
	$$
	which concludes the proof.
\end{proof}

\vspace{4mm}
\begin{lemma}
	\label{lem:10}
	$\left|\mu_{t-1}(\vx_{t}; \vtheta_t) - \mu_{\vtheta_{t}}^+\right| \leq
	\sqrt{\log(t-1+\sigma^2) - \log(\sigma^2)} \nu \sigma_{t-1}(\vx_t; \vtheta_t)$
\end{lemma}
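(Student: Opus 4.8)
The plan is to reduce the claim to a bound on the standardized gap $z := u_t/\nu$, where $u_t := (\mu_{t-1}(\vx_t;\vtheta_t) - \mu_{\vtheta_t}^+)/\sigma_{t-1}(\vx_t;\vtheta_t) \le 0$, the sign being forced because $\mu_{\vtheta_t}^+$ is the maximum of the posterior mean. Writing EI in the closed form used in Lemma~\ref{lem:eikey}, namely $\alpha^\textrm{EI}_{\vtheta_t}(\vx|\data_{t-1}) = \nu\,\sigma_{t-1}(\vx;\vtheta_t)\,\tau(u/\nu)$ with $\tau(z)=z\Phi(z)+\phi(z)$, the target inequality is equivalent, after dividing through by $\nu\,\sigma_{t-1}(\vx_t;\vtheta_t)$, to showing $z^2 \le \log(t-1+\sigma^2) - \log(\sigma^2)$. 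I would first dispose of the degenerate case $\sigma_{t-1}(\vx_t;\vtheta_t)=0$, in which the right-hand side vanishes and, since a zero-variance point has zero expected improvement while the point $\vx'$ introduced below has strictly positive expected improvement, the maximizer $\vx_t$ must in fact satisfy $\sigma_{t-1}(\vx_t;\vtheta_t)>0$; I assume this from here on.

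The next step exploits that $\vx_t$ maximizes EI. Let $\vx'=\argmax_{\vx\in\calX}\mu_{t-1}(\vx;\vtheta_t)$, so its standardized gap is $0$ and $\alpha^\textrm{EI}_{\vtheta_t}(\vx'|\data_{t-1}) = \nu\,\sigma_{t-1}(\vx';\vtheta_t)\,\tau(0)$. From $\alpha^\textrm{EI}_{\vtheta_t}(\vx_t|\data_{t-1}) \ge \alpha^\textrm{EI}_{\vtheta_t}(\vx'|\data_{t-1})$, together with the fact that the posterior variance never exceeds the prior variance $k(\vx,\vx)=1$ so that $\sigma_{t-1}(\vx_t;\vtheta_t)\le 1$, I obtain $\tau(z) \ge \sigma_{t-1}(\vx';\vtheta_t)\,\tau(0)$. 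On the other hand, for $z\le 0$ the term $z\Phi(z)$ is non-positive, which gives the elementary sandwich $\sigma_{t-1}(\vx';\vtheta_t)\,\phi(0) \le \tau(z) \le \phi(z) = \phi(0)\,e^{-z^2/2}$, using $\tau(0)=\phi(0)$. Cancelling $\phi(0)$ yields $e^{-z^2/2} \ge \sigma_{t-1}(\vx';\vtheta_t)$, that is, $z^2 \le -2\log\sigma_{t-1}(\vx';\vtheta_t)$.

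The crux is then a lower bound on the posterior standard deviation at $\vx'$, since this is exactly what converts $-2\log\sigma_{t-1}(\vx';\vtheta_t)$ into the advertised $\log(t-1+\sigma^2)-\log(\sigma^2)$. I would prove $\sigma_{t-1}^2(\vx';\vtheta_t) \ge \sigma^2/(\sigma^2+t-1)$ as follows: writing $\vk=\vk^{\vtheta_t}_{t-1}(\vx')$ and $\vK=\vK^{\vtheta_t}_{t-1}$, positive semi-definiteness of the augmented kernel matrix $\left(\begin{smallmatrix} 1 & \vk\T \\ \vk & \vK\end{smallmatrix}\right)$ gives $\vK \succeq \vk\vk\T$, hence $(\vK+\sigma^2\vI)^{-1} \preceq (\vk\vk\T+\sigma^2\vI)^{-1}$; a Sherman--Morrison computation then bounds $\vk\T(\vK+\sigma^2\vI)^{-1}\vk \le \|\vk\|^2/(\sigma^2+\|\vk\|^2)$, and since $|k^{\vtheta_t}(\vx',\vx_i)|\le 1$ forces $\|\vk\|^2 \le t-1$, monotonicity of $a\mapsto a/(\sigma^2+a)$ yields $\vk\T(\vK+\sigma^2\vI)^{-1}\vk \le (t-1)/(\sigma^2+t-1)$, i.e.\ $\sigma_{t-1}^2(\vx';\vtheta_t) \ge \sigma^2/(\sigma^2+t-1)$. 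Substituting gives $z^2 \le -2\log\sigma_{t-1}(\vx';\vtheta_t) \le \log(\sigma^2+t-1)-\log(\sigma^2)$, and multiplying back through by $\nu\,\sigma_{t-1}(\vx_t;\vtheta_t)$ completes the proof. I expect the main obstacle to be pinning down this posterior-variance lower bound cleanly; the remaining ingredients, the EI-maximality comparison and the $\tau(z)\le\phi(z)$ sandwich, are routine once it is in hand.
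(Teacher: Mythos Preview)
Your proof is correct and follows the same architecture as the paper's: compare EI at $\vx_t$ with EI at the posterior-mean maximizer $\vx'$ (the paper's $\vx_t^+$), use $\tau(z)\le\phi(z)$ for $z\le 0$ together with $\sigma_{t-1}(\vx_t;\vtheta_t)\le 1$, and then lower-bound $\sigma_{t-1}^2(\vx';\vtheta_t)$ by $\sigma^2/(t-1+\sigma^2)$.

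The only substantive difference is how the posterior-variance lower bound is obtained. The paper asserts that $\sigma_{t-1}^2(\vx';\vtheta_t)$ is minimized when all previous design points coincide with $\vx'$, and then evaluates the resulting expression $1-\mathbf{1}^{\top}(\vJ+\sigma^2\vI)^{-1}\mathbf{1}$ by eigendecomposing the rank-one all-ones matrix $\vJ$. Your route via the Schur complement of the augmented kernel matrix, giving $\vK\succeq\vk\vk^{\top}$ and hence $\vk^{\top}(\vK+\sigma^2\vI)^{-1}\vk\le\|\vk\|^2/(\sigma^2+\|\vk\|^2)$ by Sherman--Morrison, reaches the identical bound but supplies a direct justification where the paper's ``minimized when all points coincide'' is stated without proof. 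Both approaches are short; yours is arguably cleaner and self-contained.
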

\begin{proof}
	For convenience, define 
	$\vx_{t}^{+} = \argmax_{\vx \in \calX} \mu_{t-1} (\vx; \vtheta_t)$.
	Recall that 
	$\mu^{+}_{\vtheta_t} = \max_{\vx \in \calX} \mu_{t-1} (\vx; \vtheta_t)$.
	Therefore, by the fact that 
	$\alpha^\textrm{EI}_{\vtheta_t}(\vx_{t}|\data_{t-1}) 
	= \max_{\vx \in \calX} \alpha^\textrm{EI}_{\vtheta_t}(\vx|\data_{t-1})$,
	we have
	\begin{eqnarray}
		\label{eqn:ei3}
		&&\nu \sigma_{t-1}(\vx_{t}^+; \vtheta_t) \tau(0) 
		= \alpha^\textrm{EI}_{\vtheta_t}(\vx_{t}^+|\data_{t-1}) \nonumber \\ 
		&&\leq \alpha^\textrm{EI}_{\vtheta_t}(\vx_t|\data_{t-1}) 
		= \nu \sigma_{t-1}(\vx_t; \vtheta_t)
		\tau\left(\frac{\mu_{t-1}(\vx_t, \vtheta_t) - \mu_{\vtheta_t}^+}
		{\nu\sigma_{t-1}(\vx_t; \vtheta_t)}\right),
	\end{eqnarray}
	where $\tau$ is defined as in Lemma~\ref{lem:eikey}.
	We know that $\tau(0) = \frac{1}{\sqrt{2\pi}}$. Thus, equation~(\ref{eqn:ei3}) can be
	re-written as
	\begin{eqnarray} 
		\label{eqn:111}
		\frac{\sigma_{t-1}(\vx_{t}^+; \vtheta_t)}{\sqrt{2\pi}} 
		\leq \sigma_{t-1}(\vx_t; \vtheta_t) 
		\tau\left(\frac{\mu_{t-1}(\vx_t, \vtheta_t) - 
		\mu_{\vtheta_t}^+}{\nu\sigma_{t-1}(\vx_t; \vtheta_t)}\right).
	\end{eqnarray}
	By the definition of $\mu_{\vtheta_t}^+$ 
	we know that $\frac{\mu_{t-1}(\vx_t, \vtheta_t) - 
	\mu_{\vtheta_t}^+}{\nu\sigma_{t-1}(\vx_t; \vtheta_t)} \leq 0$.
	Therefore
	\begin{eqnarray}
		\label{eqn:12}
		\tau\left(\frac{\mu_{t-1}(\vx_t, \vtheta_t) - 
		\mu_{\vtheta_t}^+}{\nu\sigma_{t-1}(\vx_t; \vtheta_t)}\right)
		\leq \phi\left(\frac{\mu_{t-1}(\vx_t, \vtheta_t) - 
		\mu_{\vtheta_t}^+}{\nu\sigma_{t-1}(\vx_t; \vtheta_t)}\right) \nonumber \\
		= \frac{1}{\sqrt{2\pi}} \exp\left( -\frac{1}{2} 
			\left( \frac{\mu_{t-1}(\vx_t, \vtheta_t) - 
			\mu_{\vtheta_t}^+}{\nu\sigma_{t-1}(\vx_t; \vtheta_t)} \right)^2\right).
	\end{eqnarray}
	Combining equations~(\ref{eqn:111}) and~(\ref{eqn:12}), we have
	$$
		\left|\mu_{t}(\vx_t; \vtheta_t) - \mu_{\vtheta_t}^+\right| 
		\leq \sqrt{2\log\left(\frac{\sigma_{t-1}(\vx_t; \vtheta_t)}
		{\sigma_{t-1}(\vx_{t}^+; \vtheta_t)}\right)}
		\nu\sigma_{t-1}(\vx_t; \vtheta_t).
	$$
	Since $\log(\sigma_{t-1}(\vx_t; \vtheta_t)) \leq 0$, 
	it remains to show that 
	$-\log(\sigma_{t-1}^2(\vx_{t}^+; \vtheta_t)) \leq \log(t-1+\sigma^2) - \log{\sigma^2}$.
	To show this, it suffices to show that 
	$\sigma_{t-1}^2(\vx_{t}^+; \vtheta_t) \geq \sigma^2/(t-1+\sigma^2)$.
	To see this, first note that $\sigma_{t-1}^2(\vx_{t}^+; \vtheta_t)$ is minimized
	if $\vx_i = \vx_{t}^+$ $\forall i \leq t-1$.
	That is 
	$$
		\sigma_{t-1}^2(\vx_{t}^+; \vtheta_t) \geq 1 - \mathbf{1}^T(\vJ+\sigma^2 I)^{-1}\mathbf{1}
	$$
	where $\vJ$ is a matrix of all ones.
	Notice that $\vJ$ is of rank $1$.
	Let $\vQ\vSigma \vQ^T$ be the eigen-decomposition of $\vJ$ 
	such that $\Sigma_{1, 1} = \lambda$ where $\lambda$ is the only eigenvalue 
	of $\vJ$. 
	Since $\mathbf{1}$ is an eigenvector of $\vJ$,
	we know that $\lambda = \|\mathbf{1}\|^2_2$ and 
	$\vQ^T\mathbf{1} = [\|\mathbf{1}\|, 0, \cdots, 0]^T$.
	Because $(\vJ+\sigma^2 \vI)^{-1} = \vQ(\vSigma+ \sigma^2 \vI)^{-1} \vQ^T$, we have that
	$\mathbf{1}^T(\vJ+\sigma^2 \vI)^{-1}\mathbf{1} = 
	\frac{\|\mathbf{1}\|^2_2}{\|\mathbf{1}\|^2_2+\sigma^2 }$.
	Therefore 
	$$
		\sigma_{t-1}^2(\vx_{t}^+; \vtheta_t) 
		\geq 1 - \frac{\|\mathbf{1}\|^2_2}{\|\mathbf{1}\|^2_2+\sigma^2 }
		= \frac{\sigma^2}{\|\mathbf{1}\|^2_2+\sigma^2 }
	$$
	which concludes the proof since $\|\mathbf{1}\|^2_2 = t-1$.
\end{proof}

\subsection{Proof of main result}

\begin{proof}[Proof of Theorem~(\ref{thm:main})]

We will need the following definitions 
$\mu^{+}_{\vtheta_t} = \max_{\vx \in \calX} \mu_{t-1} (\vx; \vtheta_t)$ and $\vx_t = \argmax_{\vx \in \calX} 
\alpha^\textrm{EI}_{\vtheta_{t}}(\vx|\data_{t-1})$. 
	By the Cauchy-Schwarz inequality, 
	\bea
		|\mu_{t-1}(\vx, \vtheta_{t}) - f(\vx)| 
		&\leq& \left(\calK^{\vtheta_{t}}_{t-1}(\vx, \vx) \right)^{1/2}\|\mu_{t-1}(\cdot; \vtheta_t) - 
			f(\cdot)\|_{\calK_{t-1}^{\vtheta_t}} \nonumber \\
		&=& \sigma_{t-1}(\vx; \vtheta_{t})\|\mu_{t-1}(\cdot; \vtheta_t) - 
		f(\cdot)\|_{\calK_{t-1}^{\vtheta_t}}. \nonumber 
	\eea
	By Proposition~\ref{prop:bound} and the union bound, 
	we know that $\|\mu_{t-1}(\cdot; \vtheta_t) - 
	f(\cdot)\|_{\calK_{t-1}^{\vtheta_t}} \leq \varphi_{t}^{\vtheta_{t}} $ for all $t \geq 1$
	holds with probability
	at least $1 - \sum_{t=1}^{\infty}\frac{6\delta}{\pi^2 t^2} = 1- \delta$.
	Thus for the remainder of the proof, let us assume that 
	$|\mu_{t-1}(\vx, \vtheta_{t}) - f(\vx)| \leq \varphi_{t}^{\vtheta_{t}}
	\sigma_{t-1}(\vx; \vtheta_{t})$
	$\forall t \in \mathbb{N}, \vx \in \calX$. 
	
	The regret at round $t$ is 
	\begin{eqnarray}
		r_t &=& f(\vx^*) - f(\vx_t) \nonumber \\
		&=& ( f(\vx^*) - \mu_{\vtheta_{t}}^+) - 
			(f(\vx_t) - \mu_{\vtheta_{t}}^+) \nonumber  \\
		&\leq& I^{\vtheta_{t}}_t(\vx^*) +
			\left[ (\mu_{\vtheta_{t}}^+ - \mu_{t-1}(\vx_t; \vtheta_{t})) + 
			\varphi_{t}^{\vtheta_{t}}\sigma_{t-1}(\vx_t; \vtheta_{t}) \right]. 
		\label{eqn:23}
	\end{eqnarray}
	By Lemma~\ref{lem:eikey}, which defines the improvement as 	$I^{\vtheta}_t(\vx) = \max\{0,f(\vx) - \mu_{\vtheta}^+\}$, we know that $I^{\vtheta_{t}}_t(\vx^*) \leq
	\frac{\tau(\varphi_{t-1}^{\vtheta_{t}}/\nu_{t}^{\vtheta_{t}})}
	{\tau(-\varphi_{t-1}^{\vtheta_{t}}/\nu_{t}^{\vtheta_{t}})}
	\alpha^\textrm{EI}_{\vtheta_{t}}(\vx^*|\data_{t-1})$.
	By the assumption on $\left(\nu_{t}^{\vtheta_{t}}\right)^2$,
    there exists a constant $C_3$ such that
    $\left(\frac{\tau(\varphi_{t}^{\vtheta_{t}}/\nu_{t}^{\vtheta_{t}})}
    	{\tau(-\varphi_{t}^{\vtheta_{t}}/\nu_{t}^{\vtheta_{t}})}\right) \leq C_3$.
	By Lemma~\ref{lem:10}, we also have that 
	$(\mu_{\vtheta_{t}}^+ - \mu_{t-1}(\vx_t; \vtheta_t)) 
	\leq \sqrt{\log\left(\frac{t+\sigma^2}{\sigma^2}\right)}  
	\nu_{t}^{\vtheta_{t}} \sigma_{t-1}(\vx_t; \vtheta_{t})$.

	\begin{eqnarray}
		r_t
		&\leq& 
			C_3 \alpha^\textrm{EI}_{\vtheta_{t}}(\vx^*|\data_{t-1})
			+ 
			\left(\sqrt{\log\left(\frac{t+\sigma^2}{\sigma^2}\right)} 
			\nu_{t}^{\vtheta_{t}} + \varphi_{t}^{\vtheta_{t}}\right)
			\sigma_{t-1}(\vx_t; \vtheta_{t})
			\nonumber  \\
		&\leq& 
			C_3 \alpha^\textrm{EI}_{\vtheta_{t}}(\vx_t|\data_{t-1})
			+ \left(\sqrt{\log\left(\frac{t+\sigma^2}{\sigma^2}\right)} 
			\nu_{t}^{\vtheta_{t}} + \varphi_{t}^{\vtheta_{t}}\right)
			\sigma_{t-1}(\vx_t; \vtheta_{t})
			\nonumber  \\
		&\leq&  C_3 \left(I^{\vtheta_{t}}_t(\vx_t) + (\varphi_{t}^{\vtheta_{t}} + 
			\nu_{t}^{\vtheta_{t}})\sigma_{t-1}(\vx; \vtheta_{t})\right) 
			+ \left(\sqrt{\log\left(\frac{t+\sigma^2}{\sigma^2}\right)} 
			\nu_{t}^{\vtheta_{t}} + \varphi_{t}^{\vtheta_{t}}\right)
			\sigma_{t-1}(\vx_t; \vtheta_{t})
			\nonumber  \\
		&\leq& 
			C_3 (\mu_{t-1}(\vx_t, \vtheta_{t}) + 
			\varphi^{\vtheta_{t}}_{t} \sigma_{t-1}(\vx_t; \vtheta_{t}) - \mu_{\vtheta_{t}}^+)^+ 
			\nonumber \\
			&& +  \left( (C_3+1) \varphi_{t}^{\vtheta_{t}} + 
			\left(C_3 + \sqrt{\log\left(\frac{t+\sigma^2}{\sigma^2}\right)} \right) 
			\nu_{t}^{\vtheta_{t}}\right)
			\sigma_{t-1}(\vx_t; \vtheta_{t}) \nonumber  \\
		&\leq& 
			\left( (2C_3+1) \varphi_{t}^{\vtheta_{t}} + 
			\left(C_3+\sqrt{\log\left(\frac{t+\sigma^2}{\sigma^2}\right)} \right) 
			\nu_{t}^{\vtheta_{t}}\right)
			\sigma_{t-1}(\vx_t; \vtheta_{t}).  \label{eqn:bsqr}
	\end{eqnarray}
	Define $(\varphi^L_t)^2 := C_2\|f\|^2_{\calH_{\vtheta^U}(\calX)} 
	+ 2\gamma^{\vtheta^L}_{t-1} +  
	\sqrt{8} \log^{1/2}(2 t^2 \pi^2/3\delta) 
	\left( \sqrt{C_2} \|f\|_{\calH_{\vtheta^U}(\calX)} + \sqrt{\gamma^{\vtheta^L}_{t-1}}\right) + 
	2  \sigma \log(2 t^2\pi^2/3\delta).$
	By Lemma~\ref{lem:small}, we know that 
	$\rnorm{f}^2 \leq C_2 \|f\|^2_{\calH_{\vtheta^U}(\calX)}$.
	Also, by Lemma~\ref{lem:les}, $\gamma^{\vtheta_{t}}_t \leq \gamma^{\vtheta^L}_t$.
	Therefore, 
	$(\varphi_{t}^{\vtheta_{t}})^2 \leq \left(\varphi^L_{t}\right)^2.$
	
	\begin{eqnarray}
		r_t
		&\leq& 
			\left( (2C_3+1) \varphi_{t}^{L} + 
			\left(C_3+\sqrt{\log\left(\frac{t+\sigma^2}{\sigma^2}\right)} \right) 
			\nu_{t}^{\vtheta^L}\right)
			\sigma_{t-1}(\vx_t; \vtheta_{t})  \nonumber  \\
		&\leq& 
			\left( (2C_3+1) \varphi_{t}^{L} + 
			\left(C_3+\sqrt{\log\left(\frac{t+\sigma^2}{\sigma^2}\right)} \right) 
			\nu_{t}^{\vtheta^L}\right)
			\sigma_{t-1}(\vx_t; \vtheta^L).
			\label{eqn:half}  
	\end{eqnarray}

	To simplify notation, let $t_0$ be such that
	$\log\left(\frac{t_0}{\sigma^2}\right) > 1$ and WLOG assume $C_3 > 1$.
	Then

	\begin{eqnarray}
		\sum_{t=t_0}^{T}r_t^2 
		&\leq& \sum_{t=t_0}^{T} (2C_3+1)^2
			\left(\varphi_{t}^L + 
			\sqrt{\log\left(\frac{t}{\sigma^2}\right)}\nu_{t}^{\vtheta^L} \right)^2
			\sigma_{t-1}^2(\vx; \vtheta^L) \nonumber \\
		&\leq& \sum_{t=t_0}^{T} 2(2C_3+1)^2
			\left((\varphi_{t}^L)^2 + 
			\log\left(\frac{t}{\sigma^2}\right) (\nu_{t}^{\vtheta^L})^2\right) 
			\sigma_{t-1}^2(\vx; \vtheta^L) \nonumber \\
		&\leq& 
			2(2C_3+1)^2
			\left((\varphi_{T}^L)^2 + 
			\log\left(\frac{T}{\sigma^2}\right) (\nu_{T}^{\vtheta^L})^2\right) 
			\sum_{t=t_0}^{T} 
			\sigma_{t-1}^2(\vx; \vtheta^L).
			\label{eqn:2222}
	\end{eqnarray}
	By Lemma~\ref{lem:54},  we know that
	$\sum_{t=t_0}^{T} \sigma_{t-1}^2(\vx; \vtheta^L) 
	\leq \frac{2}{\log(1+\sigma^{2})} \gamma^{\vtheta^L}_T$.
	Finally, applying the Cauchy-Schwarz inequality yields 
	$R^2_T \leq T \sum_{t=1}^{T}r_t^2$ thus concluding the proof.
\end{proof}


\end{document}